\documentclass{article}

 \usepackage[preprint]{neurips_2026}

\usepackage[utf8]{inputenc} 
\usepackage[T1]{fontenc}    
\usepackage{hyperref}       
\usepackage{url}            
\usepackage{booktabs}       
\usepackage{amsfonts}       
\usepackage{nicefrac}       
\usepackage{microtype}      
\usepackage{xcolor}         
\usepackage{lmodern}
\usepackage{amsmath, amssymb, amsthm}
\usepackage{mathtools}
\usepackage{graphicx}
\usepackage{natbib}
\usepackage{enumitem}
\usepackage{multirow}
\usepackage{tikz}
  \usepackage{pgfplots}                                                                                                                     
  \pgfplotsset{compat=1.18}                                                                                                                 
  \usepackage{xcolor}                                                                                                                       
  \definecolor{adjcolor}{HTML}{E74C3C}                                                                                                      
  \definecolor{lapcolor}{HTML}{2E86C1}    
\usetikzlibrary{arrows.meta}
\usepackage{wrapfig}

\hypersetup{
    colorlinks=true,
    linkcolor=blue,
    citecolor=teal,
    urlcolor=magenta
}
\usepackage{subcaption} 
\usepackage{cleveref}
\newtheorem{theorem}{Theorem}
\newtheorem{lemma}{Lemma}

\theoremstyle{remark}
\newtheorem{remark}{Remark}

\title{Lost in Tokenization: Fundamental Trade-offs in Graph Tokenization for Transformers}

\author{%
 Maya Bechler-Speicher\thanks{Equal Contribution} \\
  Meta AI \\
  mayabs@meta.com\\
  \And
  Gilad Yehudai$^*$ \\
  Courant Institute of Mathematical Sciences,\\
New York University\\
  \AND
  Gil Harari \\
  John A. Paulson School of Engineering and Applied Sciences,\\ Harvard University\\
    \And
  Clayton Sanford \\
  Google Research\\
      \AND
  Amir Globerson \\
  Google Research\\
  Tel-Aviv University\\
        \And
  Joan Bruna \\
Courant Institute of Mathematical Sciences,\\
  New York University\\
}

\begin{document}

\maketitle

\begin{abstract}

Transformers have become a central architecture for graph learning, but their
application to graphs requires first choosing a tokenization: a graph-to-token
map that determines which structural information is exposed at the input. In
this work, we show that this choice is a fundamental component of transformer
expressivity. We examine three tokenizations that serve as building blocks for
many existing graph tokenizations: spectral, random-walk, and adjacency
tokenizations. We prove that different tokenizations induce distinct depth
regimes: the same graph computation may be realizable by a shallow transformer
under one tokenization, while requiring substantially larger depth under another.
For example, we prove that random-walk tokenization is lossy for any walk length, making it
impossible in general to recover the graph from it, and that while spectral tokenization is lossless, it is ill-conditioned for local tasks. 
We further show that although both random-walk and spectral tokenizations are
derived from adjacency information, it is impossible for a limited-depth
transformer to convert between tokenization families in general. In particular,
we establish lower bounds and impossibility results showing that unfavorable
tokenizations may preclude the efficient recovery of more suitable structural
representations. Finally, we complement our theory with controlled experiments on synthetic and real-world tasks, validating the predicted separations and showing that different tasks favor different structural views, and combining complementary tokenizations allows the transformer to leverage distinct signals from each representation.
\end{abstract}

\section{Introduction}

Transformers have emerged as a powerful architecture for modeling graph-structured data~\citep{dwivedi2021generalization, ying2021transformers}. However, unlike sequences or grids, graphs must first be mapped into continuous input tokens before they can be processed by a transformer~\citep{rampavsek2022recipe}. A common approach to graph tokenization encodes structural information into node- or edge-level representations, which are then fed as tokens into the model. For instance, spectral tokenization methods derive tokens via eigendecomposition of the graph Laplacian, embedding nodes in a basis that reflects global structural properties~\citep{dwivedi2021generalization, bronstein2017geometric}.
Crucially, the choice of tokenization determines which aspects of the graph structure are directly exposed to the model, thereby constraining the class of structural dependencies the transformer can express at finite depth \citep{yehudai2026depthwidth, ying2021transformers}.

 As an example, one may construct tokens directly from the graph connectivity, assigning each node a representation derived from its adjacency row~\citep{yehudai2026depthwidth}; such representations explicitly encode local structure, such as immediate neighbors or 
  short-range interactions. Alternatively, spectral tokenization derives tokens via eigendecomposition of the graph Laplacian, assigning each node coordinates in a global eigenbasis that captures large-scale structural patterns~\citep{belkin2003laplacian, dwivedi2021generalization, kreuzer2021rethinking, lim2023sign, rampavsek2022recipe}. A third common approach constructs tokens from random-walk statistics, encoding for each node the return probabilities of walks at various time scales~\citep{ 
  rampavsek2022recipe, 10.5555/3618408.3619379, dwivedi2022graph}.  These tokenizations expose fundamentally different aspects of the graph

In this paper, we study graph tokenization as a fundamental tradeoff for transformer expressivity. We show that adjacency, spectral, and random-walk tokenizations are not interchangeable: the same task may admit a shallow solution under one tokenization, while requiring substantially greater depth under another. We also uncover intrinsic limitations of individual tokenizations: random-walk tokenization is provably lossy for any walk length, spectral tokenization is ill-conditioned for local tasks such as edge prediction,  
and truncating the spectrum is brittle — omitting even a single eigenvalue can make triangle counting impossible.
Notably, both spectral and random-walk tokenizations are ultimately derived from the adjacency matrix. One might therefore hope that a transformer could learn to convert between them compensating for an unfavorable tokenization choice. We show this is not the case: we derive lower bounds and impossibility results on the ability of transformers to convert between these tokenizations at limited depth.     
We complement our theory with experiments on synthetic and real-world tasks, validating the predicted separations and showing that combining complementary tokenizations often improves performance. 

Our main contributions are as follows:

  \begin{enumerate}                                                                          \item We formalize graph tokenization as part of the computational model, making explicit how the input representation constrains transformer expressivity at finite depth.
  \item We prove depth separations between adjacency, spectral, and random-walk tokenizations. For instance, random-walk tokenization solves $k$-closed-walk detection in $O(1)$ depth, while adjacency tokenization requires $\Omega(\log k)$.                      \item We show that spectral tokenization is ill-conditioned for local edge prediction, with parameter norms scaling with the maximum degree, and that truncating the spectrum is brittle: omitting even a single eigenvalue can make triangle counting impossible.     
 \item We establish that converting between tokenization families at limited depth is in general impossible. In particular, random-walk tokenization is provably lossy for any walk length and cannot determine graph planarity regardless of model depth or width.     
 \item We validate the predicted separations empirically and show that combining complementary tokenizations often improves performance on real-world tasks.                                                                                                            
  \end{enumerate}

\section{Related Work}

\paragraph{Graph Neural Networks and Graph Transformers.}
Graph Neural Networks (GNNs)~\citep{gilmer2017neural} have been the dominant paradigm for learning on graphs, with expressivity typically characterized through message-passing schemes and their connection to the Weisfeiler--Lehman hierarchy and higher-order extensions~\citep{morris2019weisfeiler,chen2020substructure, loukas2019depth}. More recently, transformers have been adapted to graph data~\citep{dwivedi2021generalization, ying2021transformers, velickovic2018graph, brody2022attentive}, offering global attention and improved capacity for modeling long-range dependencies. A large body of work explores hybrid architectures that integrate graph structure into attention via positional encodings, subgraph features, or spectral representations~\citep{zhang2020graphbert, kreuzer2021rethinking}. However, these approaches largely assume a fixed graph representation. In contrast, we study how the \emph{choice of representation itself}—the tokenization—affects the computational capabilities of the transformer.

\paragraph{Graph Tokenization and Structural Encodings.}
Applying transformers to graphs requires mapping discrete structure into continuous tokens~\citep{rampavsek2022recipe, fatemi2023talk}. Existing approaches construct tokens using local adjacency information~\citep{yehudai2026depthwidth}, random-walk features, or global spectral embeddings derived from the graph Laplacian~\citep{belkin2003laplacian, bronstein2017geometric, dwivedi2021generalization, kreuzer2021rethinking, lim2023sign}. These representations expose different structural aspects of the graph: adjacency-based encodings emphasize local neighborhoods, while spectral methods capture global geometry and connectivity. Positional and structural encodings have also been widely studied as a means to break symmetry and inject graph information into transformers~\citep{dwivedi2021generalization, kreuzer2021rethinking}. Prior work has primarily treated these encodings as empirical design choices. In contrast, we formalize graph tokenization as part of the computational model, showing that it directly determines which structural dependencies are accessible at the input level, and consequently the depth required for transformers to realize graph computations.

\paragraph{Theoretical Limitations of Transformers.}
A growing line of work studies the algorithmic capabilities and limitations of transformers, analyzing their expressivity as a function of depth, width, and precision. Classical results establish universality under sufficient scaling~\citep{yun2020transformers, wei2021approximation, malach2023autoregressive}, while more recent work provides fine-grained bounds in restricted regimes, connecting transformers to circuit complexity and parallel computation models~\citep{merrill2023parallelism, liu2023transformers, hao2022formal}. In the context of graphs, prior work analyzes depth--width trade-offs and shows that increasing width can compensate for limited depth in solving certain graph problems, and connects transformer reasoning to distributed and parallel computation frameworks~\citep{sanford2024understanding, sanford2024parallel, karloff2010mpc}. These works, however, assume a fixed input encoding of the graph. Our work introduces tokenization as an additional axis of complexity, showing that even for identical architectures, different tokenizations induce distinct expressivity regimes, including provable depth separations and impossibility results for transforming between representations.

\section{Graph Tokenizations}
We examine three tokenization that serve as building blocks for many existing graph tokenizations: adjacency, spectral, and random-walk tokenizations. Each exposes a different structural view of the graph, and together they span the spectrum rom local connectivity to global topology. 
\subsection{Preliminaries}

Let $G=(V,E)$ be an undirected graph with $|V|=n$, adjacency matrix
$\mathbf{A}\in\{0,1\}^{n\times n}$, degree matrix $\mathbf{D}$, and graph
Laplacian
\(
    \mathbf{L}=\mathbf{D}-\mathbf{A}.
\)
We denote the degree of node $v$ by $d_v$ and the maximum degree by
$d_{\max}(G)=\max_{v\in V} d_v$. 
We write $\mathcal{G}_n$ for the set of graphs on $n$ nodes and
$\mathcal{G}=\bigcup_{n\geq 1}\mathcal{G}_n$. A graph task is a function
\(
    f:\mathcal{G}\to\mathcal{Y},
\)
where $\mathcal{Y}$ may be a finite label space, as in connectivity or
planarity, or a numerical space, as in triangle counting. 
A graph tokenization is a non-learnable map
\[
    \mathcal{P}_n:\mathcal{G}_n\to
    \mathbb{R}^{N_{\mathrm{tok}}(n)\times d_{\mathrm{tok}}(n)},
\]
which maps a graph to a sequence of input tokens before the transformer is
applied. Here $N_{\mathrm{tok}}(n)$ is the number of tokens and
$d_{\mathrm{tok}}(n)$ is the input token dimension. In this work, we focus on
node-level tokenizations, for which $N_{\mathrm{tok}}(n)=n$. We distinguish
$d_{\mathrm{tok}}(n)$, the dimension of the input tokenization, from $m$, the
hidden dimension of the transformer.

We consider a transformer $T$ with $L$ layers, hidden dimension $m$, $H$attention heads per layer (head dimension $d_h = m/H$), and $p$-bit numerical                                                      precision. A linear input projection maps the tokenization                                                                       $X^{(0)} = \mathcal{P}_n(G) \in \mathbb{R}^{n \times d_{\mathrm{tok}}}$ to                                                        
  $\mathbb{R}^{n \times m}$. Each layer $\ell = 1, \ldots, L$ applies multi-head                                                 self-attention followed by a position-wise MLP:                      \[                                                
    Z^{(\ell)} = X^{(\ell-1)} + \mathrm{MHA}(X^{(\ell-1)}), ~~ X^{(\ell)}=Z^{(\ell)} + \mathrm{MLP}(Z^{(\ell)}),
  \]                                                                                     
  where each attention head $h \in [H]$ computes                                                                                    
  \[                                                                                                                   
    \mathrm{head}_h(X) = \mathrm{softmax}\!\left(\frac{X W_Q^h (X W_K^h)^\top}{\sqrt{d_h}}\right) X W_V^h,                             
  \]                                                              
  with learned projections $W_Q^h, W_K^h, W_V^h \in \mathbb{R}^{m \times d_h}$,
  and $\mathrm{MHA}$ (Multi Head Attention) is a multi-head-attention that concatenates the $H$ heads and applies an output projection
  $W_O \in \mathbb{R}^{m \times m}$. The MLP consists of two linear layers with a
  nonlinearity. A linear readout maps $X^{(L)}$ to the output
  $\widehat{y} = T(X^{(0)})$. Unless stated otherwise, $L$ denotes transformer
  depth, while $\mathbf{D}$ denotes the graph degree matrix. Some lower bounds
  distinguish between architectures with and without residual connections; this
  distinction is stated explicitly in the corresponding theorem.

We consider transformers with $L$ layers, hidden dimension $m$, $H$ attention
heads per layer, and $p$-bit precision.
Unless stated otherwise, $L$ denotes transformer depth, while $\mathbf{D}$
denotes the graph degree matrix. Some lower bounds distinguish between
architectures with and without residual connections; this distinction is stated explicitly in the corresponding theorem.
Given a tokenization $\mathcal{P}_n$, the transformer receives
\(
    X(G)=\mathcal{P}_n(G)
\)
and outputs $\widehat{y}=T(X(G))$.
We say tokenization is lossless if it determines the input graph up to isomorphism, and lossy otherwise.
Graphs generally do not have a canonical node ordering. We therefore state for
each tokenization whether it is \textit{permutation-equivariant}: that is,
whether relabeling the input nodes relabels the output tokens in the same way.
Tokenizations that do not satisfy this property are order-dependent.
\subsection{Tokenizations}

\paragraph{Spectral Tokenization}

Spectral tokenizations derive tokens from eigendecompositions of graph operators, exposing global structural information directly at the input level.
Let $\mathbf{L}=\mathbf{D}-\mathbf{A}=\mathbf{U}\mathbf{\Lambda}\mathbf{U}^{\top}$
be the eigendecomposition of the graph Laplacian, where
$\mathbf{U}=[\mathbf{u}_1,\ldots,\mathbf{u}_n]$ contains orthonormal eigenvectors
and $\mathbf{\Lambda}=\operatorname{diag}(\lambda_1,\ldots,\lambda_n)$ contains
the corresponding eigenvalues. After fixing an ordering of the nodes, we write
$\mathbf{u}_i(v)$ for the coordinate of eigenvector $\mathbf{u}_i$ associated
with node $v$. The spectral tokenization assigns to each node $v$ the token
\[
\mathbf{x}_v =
\big(
\mathbf{u}_1(v),\ldots,\mathbf{u}_n(v),
\lambda_1,\ldots,\lambda_n
\big)
\in \mathbb{R}^{2n},
\]
yielding the tokenization matrix
$\mathcal{P}_{\mathrm{Lap}}(G)\in\mathbb{R}^{n\times 2n}$.
We write $T_{\mathrm{pre}}(n)$ for the preprocessing time required to compute the tokenization. In this regime,
\(
  T_{\mathrm{pre}}(n) = \widetilde{O}(n^2), N_{\mathrm{tok}}(n) = n, d_{\mathrm{tok}}(n) = 2n,
\)
and the representation is loseless.

In practice, one typically employs a truncated variant with level $k \le n$, assigning
\[
  \mathbf{x}_v = \bigl[\,\mathbf{u}_1(v),\; \ldots,\; \mathbf{u}_k(v),\; \lambda_1,\; \ldots,\; \lambda_k\,\bigr] \in \mathbb{R}^{2k},
\]
so that $\mathcal{P}_{\mathrm{Lap},k}(G) \in \mathbb{R}^{n \times 2k}$ with
\(
  T_{\mathrm{pre}}(n) = \widetilde{O}(kn), N_{\mathrm{tok}}(n) = n, d_{\mathrm{tok}}(n) = 2k.
\)
For small $k = O(1)$, this yields a compact global descriptor capturing only low-frequency modes. As $k$ increases, the representation becomes progressively more expressive, recovering the full tokenization at $k = \Theta(n)$. Thus, truncated spectral tokenization provides a lossy but globally informative approximation, trading off fidelity for efficiency.

\paragraph{Random-walk tokenization.}
The random-walk tokenization is derived from the row-stochastic transition matrix $\mathbf{P}=\mathbf{D}^{-1}\mathbf{A}$.
The tokenization assigns to each node $v$ a feature vector of return probabilities across walk lengths $1,\ldots,t(n)$:
\[
  \mathbf{x}_v = \bigl((\mathbf{P}^1)_{vv},\;\ldots,\;(\mathbf{P}^{t(n)})_{vv}\bigr) \in \mathbb{R}^{t(n)},
\]
where $(\mathbf{P}^i)_{vv}$ is the probability that a random walk starting at $v$ returns to $v$ after $i$ steps.
The preprocessing cost is
\(
  T_{\mathrm{pre}}(n) = O\bigl(t(n)|E|\bigr)
\)
and $d_{\mathrm{tok}}(n) = t(n)$.
Larger $t(n)$ captures longer-range diffusion patterns, but the representation is lossy for any $t(n)$, as we show in \Cref{thm:rw_planarity}.  

 \paragraph{Adjacency Tokenization}                                                         \label{sec:adj-tok}                                           
The adjacency tokenization is the most direct encoding of graph structure: each node $v \in V$ is represented by its adjacency row $\mathbf{A}v = (A{v1}, \ldots, A_{vn}) \in {0,1}^n$, so that                                                                          
  \[
 \mathcal{P}_{\mathrm{Adj}}(G) =                                                                                                                                         
\begin{bmatrix} \mathbf{A}1^\top & \cdots & \mathbf{A}n^\top \end{bmatrix}  
  \in \mathbb{R}^{n \times n}.                                                                                                                                             
\]                                                                                        This tokenization is lossless---the token matrix is the adjacency matrix itself---with   $T_{\mathrm{pre}}(n) = O(|E|)$, $N_{\mathrm{tok}}(n) = n$, and $d_{\mathrm{tok}}(n) = \Theta(n)$.                                                                                Each token exposes the full neighborhood of its node, but the token dimension grows linearly with~$n$, and global graph computations may still require substantial depth.      In the \emph{truncated} variant, we project each adjacency row into a lower-dimensional space via a random matrix $\mathbf{R} \in \mathbb{R}^{n \times d_{\mathrm{tr}}}$, sampled once and shared across all nodes and graphs, giving $\mathcal{P}{\mathrm{adj,tr}}(G) = 
  \mathbf{A}\mathbf{R} \in \mathbb{R}^{n \times d{\mathrm{tr}}}$.                                                                                                                        When $d_{\mathrm{tr}} \ll n$ this is substantially more compact ($T_{\mathrm{pre}}(n) = O(|E| + n d_{\mathrm{tr}})$, $d_{\mathrm{tok}}(n) = d_{\mathrm{tr}}$), but now lossy: distinct adjacency patterns may collide under projection.                                  
We note that adjacency tokenization is not permutation-equivariant, unlike the spectral and random-walk variants. However, it is the fundamental representation from which both spectral and random-walk tokenizations are derived, making it a natural baseline for     
  studying the trade-offs between them. Preliminary empirical evidence in prior work~\cite{yehudai2025depthwidth} suggests it can be competitive, motivating a more systematic evaluation which we provide in this work.

\section{Theoretical Results}\label{sec:theoretical_results}

In this section, we present our main theoretical results characterizing the expressive boundaries of structural tokenizations. We delineate our findings across three regimes: (1) The unique local advantages, yet global blindness, of Random Walk tokenizations; (2) The catastrophic loss of expressivity when truncating Adjacency or Laplacian tokens; and (3) A fundamental dichotomy between full Laplacian and Adjacency encodings, proving their structural capabilities, and computational bottlenecks, are strictly inverted. The proofs for all the results in this section can be found in \Cref{appen:proofs_section}

\subsection{The Power and Limitations of Random-Walk}
\label{subsec:random_walk_theory}

We first examine $k$-\textsc{closed-walk detection} (determining if a path of length $k$ starts and ends at the same node). Random walk tokenizations trivially solve this in $\mathcal{O}(1)$ depth since the walk statistics are explicitly embedded. Conversely, extracting this sequentially dependent information from an Adjacency matrix poses a severe bottleneck for constant-depth architectures. 

\begin{theorem}[Adjacency Lower Bound for $k$-Walks]\label{thm:adj_k_closed_walk}
    Assuming the standard complexity separation $\mathsf{TC}^0 \subsetneq \mathsf{NC}^1$\footnote{This is a standard conjecture in complexity theory \citep{vollmer1999introduction} that the class of problems solvable by constant-depth, polynomial-size threshold circuits is strictly contained within the class of problems solvable by logarithmic-depth boolean circuits.}, any Transformer utilizing Adjacency tokenization requires $\Omega(\log k)$ depth to solve $k$-\textsc{closed-walk detection}.
\end{theorem}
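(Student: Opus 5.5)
The plan is to argue by reduction from an $\mathsf{NC}^1$-complete problem, together with a depth-preserving simulation of transformers by threshold circuits. The standard route has three steps.

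\emph{Step 1: simulation.} A transformer with $p$-bit precision ($p=O(\log n)$), polynomial width and $L$ layers can be simulated by a threshold circuit of depth $O(L)$ and polynomial size. The known results of \citet{merrill2023parallelism} give this: each attention layer (dot products, softmax with bounded precision, weighted sums) and each MLP is computable in constant-depth $\mathsf{TC}^0$. Stacking layers gives depth $O(L)$.

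\emph{Step 2: reduction.} We encode a hard $\mathsf{NC}^1$ problem on inputs of length $k$ into $k$-closed-walk detection on an adjacency-tokenized graph. The natural source is the word problem for $S_5$ (Barrington), or equivalently iterated composition of permutations/Boolean matrix powering with a fixed small state set. We build a layered graph with $k$ layers, each containing a constant number $c$ of nodes. The edges between layer $i$ and layer $i+1$ encode the $i$-th permutation $\sigma_i$, and layer $k$ is wired back to layer $0$. To pin the walk length, we make the layered structure directed in effect. For undirected graphs we use a standard gadget: a layered cyclic graph where any closed walk of length exactly $k$ starting at a designated node $s$ in layer $0$ must advance one layer per step. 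This holds if layers are arranged around a cycle of length $k$ and back-steps cannot close up in exactly $k$ steps. Parity or a length-modulus argument handles the undirected case: take $k$ odd and make the layer cycle bipartite-free, or replace each edge by a path so that only forward traversals have consistent lengths. The walk then returns to $s$ iff $\sigma_k\circ\cdots\circ\sigma_1(s)=s$, which is $\mathsf{NC}^1$-complete to decide. The adjacency tokenization of this graph is computable from the permutation word by a projection-style ($\mathsf{AC}^0$) map, since each entry $A_{uv}$ depends on a single input symbol.

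\emph{Step 3: conclude the bound.} Suppose a transformer of depth $o(\log k)$ solved the problem. Composing it with the reduction gives threshold circuits of depth $o(\log k)$ for an $\mathsf{NC}^1$-complete problem. This needs care: a literal $\mathsf{TC}^0$-vs-$\mathsf{NC}^1$ separation only rules out \emph{constant} depth. To get $\Omega(\log k)$, I would use a padding/self-reduction argument. If depth $L(k)=o(\log k)$ sufficed, then splitting the $S_5$ product of length $N$ into blocks of length $k$ and evaluating block products with constant depth each would yield constant-depth threshold circuits for $S_5$ words of length $N$. In $\mathsf{TC}^0$, iterating a product of $N/k$ elements can be done in constant depth when $N/k$ is small, or by choosing $k=N^{\epsilon}$ and recursing $O(1/\epsilon)$ times. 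Both routes contradict $\mathsf{TC}^0\subsetneq\mathsf{NC}^1$.

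The main obstacle is this last step: converting ``not constant depth'' into a quantitative $\Omega(\log k)$. The alternative is to interpret the theorem in the standard way of this literature, namely that the graph size is polynomial in $k$ and precision is logarithmic, so depth $O(1)$ is impossible. The $\log k$ rate would then come from the matching upper bound via repeated squaring. The secondary obstacle is making the undirected gadget force forward-only walks of length exactly $k$.
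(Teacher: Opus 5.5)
Your Steps 1 and 2 use the same two ingredients as the paper's proof: a simulation of bounded-depth, log-precision transformers by threshold circuits, and Barrington's $S_5$ word problem embedded in a graph with five nodes per layer. The constructions differ slightly. The paper closes the walk with a single edge between $s\in V_1$ and $t\in V_k$, while you use an odd cycle of layers. In both versions your ``secondary obstacle'' is settled by parity alone. For odd $k$, a closed walk of length $k$ must traverse the layers monotonically exactly once; in your version, its projection to the layer cycle $C_k$ has odd net displacement divisible by $k$, hence equal to $\pm k$. For even $k$ the problem is trivial because of back-and-forth walks. So no subdivision gadget is needed.

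The genuine gap is Step 3, and the repair you sketch fails. Suppose $L(k)=o(\log k)$.
\begin{itemize}
\item Cutting a length-$N$ word into blocks of constant length leaves a word of length $\Theta(N)$, so nothing is gained.
\item Blocks of length $k=N^{\epsilon}$ each cost depth $O(L(N^{\epsilon}))$. This is $o(\log N)$ but unbounded, so $O(1/\epsilon)$ rounds of recursion give depth $O(L(N^{\epsilon})/\epsilon)$, not $O(1)$, and no contradiction with $\mathsf{TC}^0\subsetneq\mathsf{NC}^1$ arises.
\end{itemize}
More fundamentally, no argument of the form ``simulate by threshold circuits, then reduce from $S_5$'' can reach $\Omega(\log k)$. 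The $S_5$ word problem of length $k$ already has polynomial-size unbounded fan-in circuits of depth $O(\log k/\log\log k)$: multiply blocks of $\log k$ permutations (each block product is a function of $O(\log k)$ bits, hence a polynomial-size DNF) and recurse. So even an optimal circuit lower bound fed through this route could not exceed that depth. Under $\mathsf{TC}^0\subsetneq\mathsf{NC}^1$ alone, the route certifies only that the depth is not $O(1)$. Your fallback, a matching $O(\log k)$ upper bound by repeated squaring, bounds depth from above and says nothing about necessity.

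The paper's proof has exactly the same hole. Having shown that constant depth would place an $\mathsf{NC}^1$-complete problem in $\mathsf{TC}^0$, it simply asserts $\Omega(\log k)$. What both arguments actually prove is a super-constant lower bound. A true $\Omega(\log k)$ needs a different ingredient, for instance a restricted model. With linear attention and linear MLPs, the output is a polynomial of degree at most $3^{L}$ in the adjacency entries. Restrict the paper's construction (with $s=t=1$) to transitions that are either the identity or the transposition $(1\,2)$. The task becomes the parity of $k-1$ bits, whose sign-degree is $k-1$, and this forces $3^{L}\ge k-1$.
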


\textbf{Proof Intuition.} We map the sequentially ``hard'' word problem on the permutation group $S_5$ (which is $\mathsf{NC}^1$-complete) into the topology of a $k$-partite graph. Traversing edges simulates the composition of permutations. If a constant-depth transformer ($\mathsf{TC}^0$) could detect this walk, it would parallelize an $\mathsf{NC}^1$-complete problem, violating the $\mathsf{TC}^0 \subsetneq \mathsf{NC}^1$ conjecture.

Establishing a comparable $\Omega(\log k)$ depth lower bound for extracting Random Walk embeddings from the Laplacian tokenization presents a subtle complexity-theoretic challenge. Unlike Adjacency tokens, which force the network to sequentially simulate path traversals (an $\mathsf{NC}^1$-complete task), the Laplacian tokenization provides the global eigendecomposition upfront. Computing the $k$-step return probability then reduces to evaluating the spectral polynomial $\sum_i u_i(v)^2 (1 - \lambda_i/d)^k$. Because iterated scalar multiplication is known to reside within $\mathsf{TC}^0$, a constant-depth Transformer equipped with non-linear threshold logic could theoretically approximate this exponentiation in $\mathcal{O}(1)$ layers. We note that an exact $\Omega(\log k)$ multiplicative depth lower bound can be trivially recovered via algebraic degree arguments if the Transformer is strictly restricted to linear MLPs and linear attention, as in this restricted regime a transformer with depth $L$ can compute polynomials only up to degree $3^L$. However, establishing whether realistic, non-linear Transformers are bottlenecked by precision or norm-growth constraints when performing this conversion is left for future work.

\textbf{Limitations of Random-Walk.} Despite having a $\Theta(n^2)$ total entries in the tokenization, identical to a full Adjacency matrix, Random Walk tokenizations are inherently lossy. We prove they are strictly incapable of resolving global topological constraints, such as graph planarity, namely, the ability to draw a graph on a plane without intersecting edges.

\begin{theorem}[Planarity Undecidability]\label{thm:rw_planarity}
Random Walk tokenizations of any length $t(n)$ are strictly insufficient for determining graph planarity, regardless of the Graph Transformer's depth or width.
\end{theorem}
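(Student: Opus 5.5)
The plan is to exhibit two graphs $G_1,G_2$ on the same number of nodes $n$, with $G_1$ planar and $G_2$ non-planar, together with node orderings under which the random-walk token matrices coincide exactly:
\[
\mathcal{P}_{\mathrm{RW}}(G_1)=\mathcal{P}_{\mathrm{RW}}(G_2)\in\mathbb{R}^{n\times t}\quad\text{for every } t .
\]
Once this holds, every transformer $T$, of any depth, width, precision and with or without residual connections, receives the identical input $X^{(0)}$. It therefore produces the identical output $T(X(G_1))=T(X(G_2))$, whereas planarity differs on the two graphs. This is also the lossiness claim referred to earlier, so the whole theorem reduces to this single combinatorial construction. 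It suffices to find a bijection $\pi:V_1\to V_2$ with $(\mathbf{P}_1^s)_{vv}=(\mathbf{P}_2^s)_{\pi(v)\pi(v)}$ for all $s\ge 1$; we then order the nodes of $G_2$ along $\pi$. Because the condition holds for every $s$, the result is independent of $t(n)$.

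The key step is to restrict to $d$-regular graphs. Then $\mathbf{P}=\mathbf{A}/d$ and
\[
(\mathbf{P}^s)_{vv}=d^{-s}(\mathbf{A}^s)_{vv},
\]
so we need equal numbers of closed $s$-walks at matched vertices. To make the matching automatic, I will take both graphs to be walk-regular, for instance vertex-transitive. In a walk-regular graph, $(\mathbf{A}^s)_{vv}=\operatorname{tr}(\mathbf{A}^s)/n=\frac1n\sum_i\mu_i^s$, where $\mu_i$ are the adjacency eigenvalues. Consequently, two walk-regular graphs of the same degree that are cospectral (same adjacency spectrum) have all random-walk tokens equal to one common vector, and any bijection $\pi$ works. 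The problem thus becomes: find a cospectral pair of walk-regular graphs, one planar and one not.

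For the construction I would first look for Cayley graphs, which are vertex-transitive. Over an abelian group $\Gamma$ the spectrum is $\{\sum_{s\in S}\chi(s)\}_{\chi\in\widehat\Gamma}$, so one can search for two symmetric connection sets $S,S'$ whose character-sum multisets agree, with one Cayley graph planar (e.g.\ a prism $C_m\square K_2$ or a cycle-based graph) and the other containing a $K_{3,3}$ or $K_5$ minor. Disjoint unions of vertex-transitive graphs of the same degree are still walk-regular, so a second route is to combine components (e.g.\ $K_{3,3}$ together with suitable planar pieces against another union) to match spectra. Non-planarity would be certified by an explicit $K_{3,3}$ subdivision, or by the edge bound $|E|>3n-6$.

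The main obstacle is producing this explicit pair. Small cospectral vertex-transitive pairs such as the rook graph $K_4\square K_4$ versus the Shrikhande graph are both non-planar, and planarity forces low degree ($d\le 5$), which strongly constrains the spectrum. I would first run a finite computer search over cubic and quartic Cayley graphs on groups of order up to about $30$, comparing spectra and testing planarity. If that fails, I would fall back on Sunada-type constructions, which yield isospectral, vertex-transitive-compatible pairs, and check planarity case by case.
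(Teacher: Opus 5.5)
Your reduction is correct as far as it goes. Identical token matrices force identical outputs for every transformer. In a walk-regular graph $(\mathbf{A}^s)_{vv}=\operatorname{tr}(\mathbf{A}^s)/n$, so cospectral walk-regular graphs of the same degree receive identical random-walk tokens at every length. If such a pair were found, this would even be a clean proof, since the node matching is automatic.

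The gap is that the theorem \emph{is} the existence of a planar/non-planar pair with equal tokens, and that is exactly the step you leave open. You name no candidate pair and give no reason one exists in your search space. That space is narrow: regular planar graphs have degree at most $5$, and connected planar vertex-transitive graphs are essentially cycles, prisms, antiprisms and Platonic/Archimedean skeleta. A computer search over it is therefore not a plan with an assured outcome. Your fallback is also false as stated: disjoint unions of vertex-transitive graphs of equal degree need not be walk-regular. For example, in $K_4\sqcup Q_3$ one has $(\mathbf{A}^3)_{vv}=6$ on $K_4$ but $0$ on $Q_3$.

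The paper avoids walk-regularity altogether. It applies Godsil--McKay switching to a set $S$ of equal-degree vertices, so $\mathbf{D}'=\mathbf{D}$ and $\mathbf{P}'=Q\mathbf{P}Q$. Since $Qe_i=e_i$ for $i\notin S$, tokens off $S$ agree node by node, and an explicit planar $12$-vertex graph is checked to become non-planar after switching.

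Be aware, however, that the paper's final identity is only valid off $S$. For $i\in S$ one has $Qe_i=\tfrac{2}{k}\mathbf{1}_S-e_i$, and node-wise equality $((\mathbf{P}')^m)_{ii}=(\mathbf{P}^m)_{ii}$ on $S$ holds exactly when the $S\times S$ block of $\mathbf{P}^m$ has constant row sums. In the paper's example this already fails at walk length $4$. The return probabilities on $S$ are $\{1823/27783,\,515/8232\}$ in $G$ but $\{1657/24696,\,1697/27783\}$ in $G'$, each occurring twice, so not even the token multisets agree.

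Combining the switching idea with your symmetry idea closes the gap. If some group of automorphisms of $G$ maps $S$ to itself and acts transitively on it, those row sums are constant for every $m$, and the tokens agree at every vertex. For example:
\begin{itemize}
\item Stellate all eight faces of the octahedron, giving a planar graph on $14$ vertices.
\item Let $S$ be the four apexes over one colour class of the proper $2$-colouring of its faces. $S$ is independent, its apexes all have degree $3$, each octahedron vertex has exactly two neighbours in $S$, and the rotations preserving the colour classes act transitively on $S$.
\item Switching moves each apex of $S$ onto the opposite face. Each face of the other class then carries two degree-$3$ apexes.
\item These two apexes, together with the contracted opposite triangle, give a $K_{3,3}$ minor, so the switched graph is non-planar.
\end{itemize}
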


\textbf{Proof Intuition.} We construct planar and non-planar graphs that yield identical Random Walk distributions using Godsil-McKay (GM) switching, which is a spectral operation that rewires local neighborhoods while strictly conserving transition probabilities. We show that applying GM switching to a specific planar graph physically forces the creation of a $K_{3,3}$ minor. Because the network receives identical tokens for both graphs, it is blind to this global topological phase transition. Note that Full Adjacency/Laplacian encodings trivially avoid this, as they possess universal representation capacity over graph tasks, and can theoretically resolve planarity given a sufficiently large model.

\subsection{The Brittleness of Truncation}

In practice, full $\Theta(n)$ tokenizations are computationally expensive. However, we demonstrate that even mild dimensional reduction irrevocably breaks the network's capacity to solve fundamental structural tasks like triangle counting. This task can easily be solved by the full tokenizations, as the number of triangles in a graph is equal to $\frac{1}{6}\text{Tr}(A^3)$. 

Crucially, the definition of ``truncation'' fundamentally differs depending on the tokenization modality. For Laplacian tokenization, standard truncation involves discarding a subset of the spectrum, typically retaining only the $k$ smallest (low-frequency) or largest (high-frequency) eigenvectors. Conversely, for Adjacency encodings, dropping arbitrary rows is nonsensical as it explicitly deletes local edges; instead, truncation is modeled as a bottleneck on the representational capacity of the tokens, specifically restricting their hidden dimension. In this section, we prove that both forms of truncation introduce severe mathematical blind spots. 

\begin{theorem}[Triangle Counting Failure under Truncation]\label{thm:truncation_brittleness}
    Let $T$ be a Transformer trained to count triangles. The following capacity limits hold:
    \begin{enumerate}
        \item \textbf{Laplacian:} If $T$ uses a truncated spectrum retaining only $k < n-1$ eigenvalues (either largest or smallest), it cannot accurately count triangles.
        \item \textbf{Adjacency:} If $T$ uses tokens with hidden dimension $m$, precision $p$, and $H$ attention heads across $L$ layers, then $mpHL = \Omega(n)$ with residual connections, and $mpH = \Omega(n)$ without.
    \end{enumerate}
\end{theorem}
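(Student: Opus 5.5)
Both parts have the same shape: we exhibit two inputs that look the same to the model (or cannot be separated with the available capacity) but have different triangle counts.

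\textbf{Part 1 (Laplacian).} The plan is an indistinguishability argument: two graphs $G_1, G_2$ on $n$ nodes with different triangle counts whose truncated tokenizations $\mathcal{P}_{\mathrm{Lap},k}$ coincide up to node permutation and eigenvector sign/basis ambiguity. Then no function of the tokens, including any transformer, can separate them.

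Since $k<n-1$, at least two eigenpairs are discarded, which leaves room to change the graph. I would work in the complement-closed family of graphs that are disjoint unions of small pieces.

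\emph{Keeping the smallest eigenvalues.} Let $G_1$ contain a triangle component and $G_2$ a path $P_3$ component, each padded with isolated vertices or a common remaining part. Both components are connected on three vertices, so they contribute a zero eigenvalue with the constant vector on their support. They differ only in the nonzero eigenvalues: $\{3,3\}$ for $K_3$ versus $\{1,3\}$ for $P_3$. By padding the common part so that these eigenvalues lie above the $k$ retained smallest ones, the retained portion is identical.

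\emph{Keeping the largest eigenvalues.} This is the delicate case. I would use Laplacian-cospectral-in-the-top-part constructions, for example complements: $L(\bar G)=nI-J-L(G)$. Complementation maps the smallest nonzero eigenvalues to the largest ones, with the same eigenvectors orthogonal to $\mathbf 1$. So if $G_1,G_2$ agree on their bottom part (including eigenvectors, after suitable node alignment), then $\bar G_1,\bar G_2$ agree on their top $k$ eigenpairs. Their triangle counts still differ, because the triangle count of the complement is determined by the triangle count of $G$ together with degree and edge statistics, and these can be arranged to differ.

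The main obstacle is making the eigenvector coordinates, not just the eigenvalues, agree on the retained part. To handle it, I would confine the change to a block whose retained eigenvectors vanish on it or are constant on it.

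\textbf{Part 2 (Adjacency).} Here I would use a communication-complexity reduction from set disjointness, following the standard transformer lower bounds of \citet{sanford2024understanding}.

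Split the nodes into two halves held by Alice and Bob. Alice's input $a\in\{0,1\}^{\Theta(n)}$ encodes edges $(x_i,y_i)$ inside her tokens, and Bob's input $b$ encodes edges $(y_i,z_i)$. We fix edges $(x_i,z_i)$, so that a triangle $x_i y_i z_i$ exists iff $a_ib_i=1$. Then triangle counting decides disjointness, which needs $\Omega(n)$ bits of communication.

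Next, simulate the transformer. Each attention head in each layer can be computed by the two parties after exchanging $O(mp)$ bits per head: partial softmax numerators and denominators at precision $p$, as in the standard simulation.

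\emph{With residual connections,} token states depend on all previous layers, so the total communication is $O(mpHL)$, which gives $mpHL=\Omega(n)$.

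\emph{Without residuals,} I would argue that one layer already must carry the information across the cut, giving $mpH=\Omega(n)$. This follows because the output after the first layer is a function of the exchanged first-layer summaries of each party, and later layers act only on those aggregated states.

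I expect the without-residual claim to need the most care. If the per-layer bottleneck argument fails, I would fall back to the weaker bound with the factor $L$.
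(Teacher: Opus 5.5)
\textbf{Part 2 has a genuine gap: the communication accounting is wrong, and your instance is too small.} The claimed $O(mp)$ bits per head per layer is the single-query compression from the one-layer, one-output-token setting. Bob's partial numerator $\sum_{i\in B}e^{q_r^\top k_i}v_i$ and denominator depend on the query $q_r$ of the receiving token. In an $L$-layer model, every one of the $\Theta(n)$ token states on each side must be updated in every layer before the next layer can run, so a layer costs $\Theta(nmpH)$ bits, not $O(mpH)$. In your instance the adjacency row of $y_i$ also contains both $a_i$ and $b_i$, so even forming the first-layer projections of the shared tokens needs per-token exchange. Any protocol that touches each token is therefore already as expensive as Bob sending $b$ outright. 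Against a $\Theta(n)$-bit disjointness instance, the correct cost $O(nmpHL)$ gives only the vacuous $mpHL=\Omega(1)$. The same objection defeats your residual-free argument and your fallback.

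The missing idea is to make the instance carry $\Theta(n^2)$ bits, as the paper does. Take three independent blocks $V_1,V_2,V_3$ of size $n$. Let $a\in\{0,1\}^{n\times n}$ be the entire bipartite adjacency between $V_1$ and $V_2$, and $b$ the entire bipartite adjacency between $V_2$ and $V_3$. Add the fixed perfect matching $i\in V_1\leftrightarrow i\in V_3$, so a triangle exists iff $a_{ij}b_{ji}=1$ for some $(i,j)$. Alice simulates $V_1$ and Bob simulates $V_3$. Each $V_2$ row is split as $x_{i,A}+x_{i,B}$ and handled by linearity of the projections. This costs $O(nmpH)$ bits per layer, giving $nmpHL=\Omega(n^2)$, i.e.\ $mpHL=\Omega(n)$. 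Without residuals, Alice can finish alone once she has Bob's first-layer queries, keys and values, which gives $mpH=\Omega(n)$.

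\textbf{Part 1.} Your smallest-eigenvalue pair $K_3+(n-3)K_1$ versus $P_3+(n-3)K_1$ works for every $k\le n-2$. The two graphs share the whole $(n-2)$-dimensional zero eigenspace; choose identical orthonormal bases there rather than appealing to sign/basis invariance, which a transformer on raw eigenvector tokens does not have.

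The largest-eigenvalue case is left unresolved, and your complement trick is off by one. The vector $\mathbf{1}$ stays at eigenvalue $0$ in both $G$ and $\overline{G}$. So bottom-$k$ agreement of $G_1,G_2$ yields only top-$(k-1)$ agreement of the complements. Your own pair complements to spectra $\{0,(n-3)^{(2)},n^{(n-3)}\}$ and $\{0,n-3,n-1,n^{(n-3)}\}$, which differ in the $(n-2)$-th largest eigenvalue, so $k=n-2$ is not covered.

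What you need is a pair differing in a \emph{single} eigenpair whose moving eigenvalue stays outside the retained block. The paper gets this from a twin lemma. If $u\not\sim v$ and $N(u)=N(v)$ with $|N(u)|=d$, then adding $uv$ changes $L$ by $(e_u-e_v)(e_u-e_v)^\top$, and $e_u-e_v$ is an eigenvector. Hence only the eigenvalue $d$ moves, to $d+2$, while exactly $d$ triangles are created. Your $P_3\to K_3$ step is an instance of this. Applied to $K_{n-2}\vee\overline{K_2}$ versus $K_n$, with spectra $\{0,n-2,n^{(n-2)}\}$ and $\{0,n^{(n-1)}\}$, it settles the largest case. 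In your complement language, these are the complements of a single edge plus isolated vertices and of the empty graph, which do agree on their bottom $n-1$ eigenpairs.
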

\textbf{Proof Intuitions.} For Laplacian encodings, we construct symmetric topologies (e.g., joined cliques) where adding a single edge creates multiple new triangles but alters exactly \textit{one} eigenvalue. Discarding this diverging eigenvalue renders the original and augmented graphs identical to the network. For Adjacency encodings, we map triangle counting to the Set Disjointness problem in communication complexity. The attention mechanism acts as a communication channel; resolving shared neighborhoods across bipartite partitions strictly requires transmitting $\Omega(n)$ bits, which fails if the token bandwidth is truncated.

\subsection{Comparison Between Laplacian and Adjacency Tokenization}

Finally, we formalize a fundamental dichotomy: Adjacency tokenization effortlessly resolves local tasks but suffers an inescapable computational bottleneck for global topology, whereas Laplacian tokenization naturally captures global properties but is mathematically ill-conditioned for local tasks such as edge prediction.

\subsubsection{The Global Bottleneck of Adjacency Tokenization}

\begin{theorem}[$\Omega(\log n)$ Depth Lower Bound for Connectivity]\label{thm:connectivity_bound}
    Assuming\footnote{This is also a standard conjecture in complexity theory \citep{vollmer1999introduction} which is strictly weaker than the conjecture in \Cref{thm:adj_k_closed_walk} since $\mathsf{NC}^1 \subseteq \mathsf{L}$.} $\mathsf{TC}^0 \subsetneq \mathsf{L}$, any Transformer requires $\Omega(\log n)$ depth to solve global graph connectivity, even when provided with full $\Theta(n)$ Adjacency rows.
\end{theorem}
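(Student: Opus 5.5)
The plan is to follow the template of \Cref{thm:adj_k_closed_walk} and reduce a problem complete for $\mathsf{L}$ to graph connectivity under Adjacency tokenization. We then argue that a transformer of depth $L$ with $p=O(\log n)$ precision and polynomial width can be simulated by a $\mathsf{TC}^0$-style threshold circuit whose depth grows linearly in $L$.

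\textbf{Step 1: simulation.} We invoke the standard simulation (as in the circuit-complexity results of \citet{merrill2023parallelism}): a single layer with $O(\log n)$-bit arithmetic is computable by constant-depth, polynomial-size threshold circuits. This covers attention scores, the softmax (iterated addition, division and an exponential approximation at bounded precision), the MLP, and the residual sums. Stacking $L$ layers therefore gives threshold circuits of depth $O(L)$ and size $\mathrm{poly}(n)$. Because the tokens are exactly the rows of $\mathbf{A}$, the input to the circuit is the $n^2$ adjacency bits, and the linear input projection is absorbed into the first layer. Consequently, if $L=O(1)$ sufficed for connectivity on all $n$, connectivity would lie in (non-uniform, or uniform under uniform weights) $\mathsf{TC}^0$.

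\textbf{Step 2: hardness of connectivity.} Undirected $s$-$t$ connectivity is $\mathsf{L}$-complete under $\mathsf{AC}^0$ (even first-order) reductions, by Reingold's theorem together with the classical completeness results. We reduce $s$-$t$ connectivity to global connectivity inside $\mathsf{AC}^0$. One route is to build $G'$ by adding a fresh vertex joined to every vertex outside the components of $s$ and $t$; this is awkward because the components are unknown. A cleaner route uses a known $\mathsf{L}$-complete formulation, namely deciding whether a graph made of disjoint paths or cycles is connected (\emph{undirected graph connectivity} / the cycle problem of Cook--McKenzie). The reduction is then just the identity, and it is compatible with the fixed tokenization since the reduction only rewrites bits of $\mathbf{A}$. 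Composing the $\mathsf{AC}^0$ reduction with the constant-depth circuit yields $\mathsf{L}\subseteq\mathsf{TC}^0$, which contradicts the assumption $\mathsf{TC}^0\subsetneq\mathsf{L}$. Hence the depth must be unbounded.

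\textbf{Step 3: quantitative $\Omega(\log n)$.} A pure constant-vs-unbounded contradiction only gives $\omega(1)$. To obtain $\Omega(\log n)$, we would argue by padding: suppose depth $o(\log n)$ sufficed. A $\mathsf{TC}^0$-hard-but-$\mathsf{L}$-complete instance on $N$ vertices can be embedded in an $n$-vertex graph with $n=N^{c}$ by adding isolated-free pendant padding, for example by attaching the extra vertices to $s$. With the depth bound translated through this padding, one derives a threshold circuit family of depth $o(\log N)$ for an $\mathsf{L}$-complete problem. This contradicts the hypothesis read in its stronger standard form, that $\mathsf{L}$-complete problems are not in threshold circuits of depth $o(\log n)$.

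The main obstacle is precisely this last step. The bare hypothesis $\mathsf{TC}^0\subsetneq\mathsf{L}$ separates only constant depth from unbounded depth. The plan is therefore to state explicitly the strengthened conjecture, or the convention (as in \citet{sanford2024understanding}), under which ``$\Omega(\log n)$'' is interpreted. The precision and softmax simulation details of Step 1 are routine by comparison.
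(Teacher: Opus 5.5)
Your Steps 1 and 2 are the paper's proof. The paper simulates a constant-depth transformer by constant-depth polynomial-size threshold circuits, citing the saturated-transformer simulation. It then invokes Reingold for $\mathsf{L}$-completeness of undirected connectivity under $\mathsf{AC}^0$ reductions; your discussion of $s$-$t$ versus global connectivity is more careful than its citation. It concludes $\mathsf{L}\subseteq\mathsf{TC}^0$.

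The paper stops there. Its contradiction is derived only from the assumption $L=O(1)$, and it then states $\Omega(\log n)$ with no further argument. So the obstacle you isolate in Step 3 is genuine, and it is a gap in the paper's own proof, not only in yours. From $\mathsf{TC}^0\subsetneq\mathsf{L}$, this route yields only that the depth cannot be bounded by a constant.

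Your Step 3 does not close this gap. The padding is a no-op: with $n=N^c$ one has $\log n=\Theta(\log N)$. Step 1 already turns depth $L(n)$ directly into threshold depth $O(L(n))$, so depth $o(\log n)$ gives threshold circuits of depth $o(\log n)$ for connectivity with no padding at all.

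All the content therefore sits in the assumption that $\mathsf{L}$-complete problems have no polynomial-size threshold circuits of depth $o(\log n)$. That is not a ``standard form'' of $\mathsf{TC}^0\subsetneq\mathsf{L}$ but a much stronger conjecture. Cut an $\mathsf{NC}^1$ circuit into blocks of depth $\log\log n$. Each block output depends on at most $\log n$ block inputs, so it is a polynomial-size DNF. Hence $\mathsf{NC}^1$ has polynomial-size unbounded fan-in circuits of depth $O(\log n/\log\log n)$, and your assumption already implies $\mathsf{NC}^1\neq\mathsf{L}$.

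Uniformity raises a similar issue. With arbitrary real weights chosen per $n$, Step 1 lands only in non-uniform $\mathsf{TC}^0$, which the uniform separation does not rule out.

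So the theorem as literally stated is not established by this argument, yours or the paper's. Under the stated hypothesis you can prove that the depth is not $O(1)$. You get $\Omega(\log n)$ only if the stronger circuit-depth hypothesis is made an explicit assumption of the theorem.
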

\textbf{Proof Intuition.} Undirected Graph Connectivity is complete for Logarithmic Space ($\mathsf{L}$). Because constant-depth Transformers are bounded by $\mathsf{TC}^0$, solving this globally transitive task in $O(1)$ layers would force $\mathsf{L} \subseteq \mathsf{TC}^0$, violating standard complexity separations. 

\subsubsection{The Local Ill-Conditioning of Laplacian Tokenization}

Extracting a binary local edge from a global spectrum forces the network into a severe optimization bottleneck.

\begin{theorem}[Laplacian Ill-Conditioning Lower Bound]\label{thm:lap_ill_condition}
    Let $T$ be a $1$-layer Transformer with hidden dimension $m$ and $H$ attention heads, utilizing full Laplacian tokenization to predict the existence of an edge between nodes $u, v$ in a graph with maximum degree $d_{\max}$. Denote by $W_V, W_{QK} \in \mathbb{R}^{m \times m}$ the value and query-key projection matrices, by $\mathrm{Lip}_{\mathrm{MLP}}$ the Lipschitz constant of the decoder MLP, and by $X = \mathcal{P}_{\mathrm{Lap}}(G)$ the input tokens. Then $T$'s parameters must satisfy:
    \[
    \mathrm{Lip}_{\mathrm{MLP}} \cdot \|W_V\|_2 \left( 1 + \gamma \right) \ge d_{\max}~,
    \]
    where $\gamma = \|W_{QK}\|_2 \|X\|_2^2$ is the maximum logit energy.
\end{theorem}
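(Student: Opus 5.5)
We argue by a Lipschitz (sensitivity) comparison between two nearby inputs whose edge labels differ. The idea is to find a pair of graphs, or a single graph with two node pairs, where the correct edge predictions differ by a constant while the spectral tokens differ by only $O(1/d_{\max})$. Any network separating them must then have Lipschitz constant at least of order $d_{\max}$. Bounding the Lipschitz constant of a one-layer transformer by $\mathrm{Lip}_{\mathrm{MLP}}\,\|W_V\|_2(1+\gamma)$ then gives the stated inequality.

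\textbf{Step 1: Lipschitz bound for the transformer.} For a perturbation $X\mapsto X'$ of the tokens, the attention output is $\mathrm{softmax}(XW_{QK}X^\top)XW_V$. Write its change as the sum of two terms:
\[
S(X)(X-X')W_V + \bigl(S(X)-S(X')\bigr)X'W_V .
\]
The first term is bounded by $\|W_V\|_2\|X-X'\|_2$, since softmax rows are stochastic. For the second, the softmax Jacobian has norm at most $1$ (up to constants), and the change in logits is bounded by $2\|W_{QK}\|_2\|X\|_2\|X-X'\|_2$. Multiplying by $\|X'\|_2\|W_V\|_2$ gives a contribution of order $\gamma\|W_V\|_2\|X-X'\|_2$. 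Absorbing constants, with norms normalized as in the statement and a slack factor allowed, the readout changes by at most
\[
\mathrm{Lip}_{\mathrm{MLP}}\|W_V\|_2(1+\gamma)\|X-X'\|_2 .
\]
Since $U$ is orthogonal, $\|X\|_2$ is controlled by $\max(1,\lambda_{\max})$; this is why $\gamma$ is defined with $\|X\|_2^2$.

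\textbf{Step 2: hard instance.} The target is two inputs with edge label $1$ versus $0$ whose token rows differ by at most $1/d_{\max}$ in norm. The plan is to take a graph where node $u$ has $d_{\max}$ neighbors that are structurally nearly interchangeable, for example a star or complete bipartite piece. The query pair $(u,v)$ with $v$ a neighbor is then compared against $(u,w)$ with $w$ a non-neighbor, or alternatively $G$ is compared with $G$ plus the edge $(u,w)$. The key identity is $A_{uv} = -\sum_i \lambda_i \mathbf{u}_i(u)\mathbf{u}_i(v)$ for $u\neq v$. Thus the edge bit is a bilinear function of the token rows weighted by eigenvalues, and eigenvalues are of order $d_{\max}$. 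A change of $\Theta(1)$ in $A_{uv}$ can therefore be produced by a token change of size $O(1/d_{\max})$. By Weyl's inequality and Davis–Kahan-type bounds, adding one edge perturbs $\mathbf{L}$ by a rank-two matrix of norm $2$, so the eigenvalues move by $O(1)$. Making the relevant coordinates move only by $O(1/d_{\max})$ requires an eigen-gap of order $d_{\max}$, which the star or clique construction supplies: its top eigenvalue $\approx d_{\max}+1$ is separated from the bulk. An alternative concrete route is to compare $v,w$ that are nearly twin nodes differing in adjacency to $u$, with $\|\mathbf{x}_v-\mathbf{x}_w\|=O(1/d_{\max})$.

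\textbf{Step 3: conclude.} The prediction difference is at least a constant, normalized to $1$, and is at most the Lipschitz bound times a token difference of order $1/d_{\max}$. Rearranging gives the claim. The main obstacle is Step 2: making the token perturbation provably $O(1/d_{\max})$, including the eigenvalue coordinates. Those move by $O(1)$ under an edge addition, so they must either be matched exactly by a carefully chosen isospectral pair or absorbed by normalization. I would first try the twin-node comparison inside a single graph, where the eigenvalue coordinates are identical for all tokens and only eigenvector entries differ.
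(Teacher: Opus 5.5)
Your Step~1 is essentially the paper's capacity bound, so everything rests on Step~2. There is a genuine gap there: neither instance you propose gives two valid token matrices at distance $O(1/d_{\max})$ with different labels.

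The twin-node comparison inside one graph cannot work. $\mathbf{U}$ is orthogonal, so its rows are orthonormal, and the eigenvalue coordinates are shared by all tokens. Hence $\|\mathbf{x}_v-\mathbf{x}_w\|_2=\sqrt{2}$ for any two distinct nodes, in any graph and for any choice of eigenbasis, which gives only a constant lower bound.

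Comparing $G$ with $G+(u,w)$ fails because of the eigenvalue block, which no Davis--Kahan argument controls. Adding an edge adds the positive semidefinite matrix $(e_u-e_w)(e_u-e_w)^\top$ of trace $2$, so the spectrum moves by a vector $\delta$ with $\mathbf{1}^\top\delta=2$. Since every token repeats the whole spectrum, $\|X-X'\|_2\ge\|\mathbf{1}\delta^\top\|_2=\sqrt{n}\,\|\delta\|_2\ge 2$ for every choice of eigenbases and every ordering. In the twin configuration of \Cref{lem:adding_an_edge} the eigenvectors can even be kept fixed, but one eigenvalue still moves by exactly $2$. As written, Step~3 therefore yields only $\mathrm{Lip}_{\mathrm{MLP}}\|W_V\|_2(1+\gamma)=\Omega(1)$.

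The paper does not construct such a pair. It argues, via its invariant-theory step, that $T$ must realize $-\mathbf{x}_u^\top\mathbf{\Lambda}\mathbf{x}_v$. It then computes $\|\mathbf{\Lambda}\mathbf{x}_v\|_2=\sqrt{[\mathbf{L}^2]_{vv}}=\sqrt{d_v^2+d_v}$ for $v$ of maximum degree, and compares this gradient with the same capacity bound as your Step~1. Your finite-difference formulation is the more careful one: correctness is imposed only on valid token matrices, so it does not by itself force $T$'s gradient to dominate that of one particular extension. The price is that an exactly cospectral pair becomes unavoidable, and the simplest one is a relabelling.

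For instance, take $K_{2,s}$ on $\{v,w\}\cup S$ with pendant vertices $u\sim v$ and $u'\sim w$, and let $G'=\pi G$ for the transposition $\pi=(v\,w)$. The spectra coincide, while $A_{uv}$ changes from $1$ to $0$. Work on the subspace antisymmetric under the automorphism $(v\,w)(u\,u')$, with basis $a=(e_v-e_w)/\sqrt{2}$, $b=(e_u-e_{u'})/\sqrt{2}$. There the Laplacian acts as $\begin{pmatrix} s+1 & -1\\ -1 & 1\end{pmatrix}$, so its top eigenvector $\phi$ satisfies $\sin\angle(\phi,a)<1/s$. Let $\phi$ be a column of $\mathbf{U}$, let $P$ be the permutation matrix of $\pi$, and let $D$ flip the sign of that column. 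Then $\mathbf{U}'=P\mathbf{U}D$ is a valid eigenbasis for $G'$. Moreover, $\mathbf{U}^\top\mathbf{U}'$ is the product of the reflections in $\mathbf{U}^\top a$ and $\mathbf{U}^\top\phi$, so $\|X-X'\|_2=\|\mathbf{U}'-\mathbf{U}\|_2=2\sin\angle(a,\phi)<2/(d_{\max}-1)$. Combined with Step~1, this gives the theorem up to a constant factor. It relies on the assumption the paper also makes, namely that $T$ must be correct for every valid eigenbasis.
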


\textbf{The Optimization Dilemma.} Because Laplacian tokens $X$ contain unnormalized eigenvalues, their norm scales with graph size ($\|X\|_2^2 = \Omega(n^2)$ for dense graphs where $d_{\max} = \Theta(n)$). To satisfy the $\Omega(n)$ gradient capacity mandated by \Cref{thm:lap_ill_condition}, the network is trapped. It must either allow the logit energy $\gamma$ to scale to $\Omega(n)$, causing Softmax saturation and vanishing gradients, or aggressively shrink $\|W_{QK}\|_2$ to keep $\gamma = \mathcal{O}(1)$. The latter instantly collapses the left side of the bound, forcing the remaining weights ($\mathrm{Lip}_{\mathrm{MLP}} \cdot \|W_V\|_2$) to diverge linearly with $n$, causing catastrophic weight explosion and ill-conditioning.

\textbf{Proof Intuition.} By Weyl's Invariant Theory, the only orthogonal-invariant continuous function capable of computing an edge state $A_{uv}$ from spectral tokens is the eigenvalue-weighted inner product of the nodes' spatial coordinates. We prove that the gradient of this target function grows quadratically with the target node's degree. Thus, decoding an edge requires a gradient of magnitude $\Omega(d_{\max})$, forcing the network's bounded capacity to stretch to match it.

\begin{remark}[Extension to Deep Transformers]
We present \Cref{thm:lap_ill_condition} for simplicity on a $1$-layer transformer, but it can be readily extended to deeper models.
By recursively applying the Multivariate Chain Rule, the capacity bound extends to an $L$-layer Transformer, yielding a per-layer parameter requirement that scales as $\|W_V\|_2 ( 1 + \gamma ) \ge \Omega( d_{\max}^{1/L} )$. While depth mitigates the linear divergence, it enforces an exponential tradeoff. The optimizer must still choose between sequential Softmax saturation (vanishing gradients) or stretching the projection weights to $\Omega(d_{\max}^{1/L})$ (exploding gradients). Consequently, deep networks do not solve the ill-conditioning of Laplacian edge detection, but rather distribute it across layers.
\end{remark}

\subsection{Conversions Between Tokenizations}

\begin{wrapfigure}{r}{0.4\textwidth}
\centering
\vspace{-10pt}
\resizebox{\linewidth}{!}{%
\begin{tikzpicture}[
    >={Stealth[length=3mm,width=2mm]},
    token/.style={
        rectangle,
        draw=black,
        rounded corners,
        fill=blue!5,
        thick,
        minimum width=3.1cm,  
        minimum height=0.9cm, 
        inner sep=4pt,        
        align=center,
        font=\small
    },
    imposs/.style={
        draw=red,
        dashed,
        very thick,
        ->
    },
    hard/.style={
        draw=orange!85!black,
        very thick,
        ->
    },
    ill/.style={
        draw=purple,
        dotted,
        very thick,
        ->
    },
    thmlabel/.style={
        font=\footnotesize,
        align=center,
        text=black
    }
]

\node[token] (trunc) at (0, 5.5)     {Truncated\\ tokenizations};
\node[token] (lap)   at (6.5, 5.5)   {Laplacian\\ $\mathcal{P}_{\mathrm{Lap}}$};
\node[token] (adj)   at (0, 0)       {Adjacency\\ $\mathcal{P}_{\mathrm{Adj}}$};
\node[token] (rw)    at (6.5, 0)     {Random Walk\\ $\mathcal{P}_{\mathrm{RW}}$};

\draw[imposs] 
    (trunc.east) -- 
    node[above, thmlabel, yshift=2pt] {Impossible\\ (\Cref{thm:truncation_brittleness})} 
    (lap.west);

\draw[imposs] 
    (trunc.south) -- 
    node[left, thmlabel, xshift=-2pt] {Impossible\\ (\Cref{thm:truncation_brittleness})} 
    (adj.north);

\draw[imposs] 
    (rw.north) -- 
    node[right, thmlabel, xshift=2pt] {Impossible\\ (\Cref{thm:rw_planarity})} 
    (lap.south);

\draw[hard]   
    (adj.15) -- 
    node[above, thmlabel, yshift=2pt] {$\Omega(\log k)$\\ (\Cref{thm:adj_k_closed_walk})} 
    (rw.165);
    
\draw[imposs] 
    (rw.195) -- 
    node[below, thmlabel, yshift=-2pt] {Impossible\\ (\Cref{thm:rw_planarity})} 
    (adj.345);

\draw[hard] 
    (adj) to[bend left=12] 
    node[sloped, above, thmlabel, yshift=2pt, pos=0.35] {$\Omega(\log n)$ (\Cref{thm:connectivity_bound})} 
    (lap);
    
\draw[ill]  
    (lap) to[bend left=12] 
    node[sloped, below, thmlabel, yshift=-2pt, pos=0.35] {Ill-cond. (\Cref{thm:lap_ill_condition})} 
    (adj);

\begin{scope}[shift={(0.65,-1.7)}]
    \draw[draw=gray!60, rounded corners] (0,0) rectangle (5.2,-1.6);

    \draw[imposs] (0.2,-0.35) -- (0.8,-0.35);
    \node[anchor=west, font=\footnotesize] at (0.9,-0.35) {Impossible conversion};

    \draw[hard] (0.2,-0.8) -- (0.8,-0.8);
    \node[anchor=west, font=\footnotesize] at (0.9,-0.8) {Requires depth lower bound};

    \draw[ill] (0.2,-1.25) -- (0.8,-1.25);
    \node[anchor=west, font=\footnotesize] at (0.9,-1.25) {Ill-conditioned / unstable};
\end{scope}

\end{tikzpicture}
}
\caption{Summary of conversion limits between graph tokenizations.}
\label{fig:token_conversions}
\vspace{-10pt}
\end{wrapfigure}
As summarized in \Cref{fig:token_conversions}, the distinct expressive bottlenecks of each tokenization imply that a transformer cannot easily circumvent an unfavorable representation by internally converting it into a more suitable one at limited depth. While Random Walk tokenizations explicitly encode walk statistics, recovering this sequentially dependent information from Adjacency tokens requires an $\Omega(\log k)$ depth overhead (\Cref{thm:adj_k_closed_walk}). Conversely, Random Walk tokens are inherently blind to global topological constraints like planarity, making the exact recovery of full Adjacency or Laplacian representations strictly impossible (\Cref{thm:rw_planarity}). A similar dichotomy exists between global and local structures: extracting global graph connectivity from local Adjacency rows demands $\Omega(\log n)$ layers (\Cref{thm:connectivity_bound}), whereas moving in the reverse direction to extract local edges from global Laplacian tokens forces severe parameter explosion and optimization ill-conditioning (\Cref{thm:lap_ill_condition}). Finally, any lossy dimensional reduction, such as truncating the Laplacian spectrum, irrevocably destroys the structural signals required to recover exact adjacency-level features like triangles, rendering such conversions impossible (\Cref{thm:truncation_brittleness}).

\section{Empirical Evaluation}

\begin{table}
\caption{Empirical comparison of a transformer with different tokenizations, GIN, and DeepSet across different tasks. The evaluation metric is shown below each task name, with $\uparrow$ indicating that higher is better and $\downarrow$ indicating that lower is better. Results are reported as mean $\pm$ standard deviation over 3 seeds. \textbf{Bold} indicates the single best result in each dataset column. \textcolor{blue}{Blue} indicates the best single-tokenization result.}
  \label{tab:empirical_results_all}
  \centering
  \setlength{\tabcolsep}{0.5pt}
  \tiny
  \begin{tabular}{lllcccccccc}
    \toprule
    \multirow{3}{*}{Model} & \multirow{3}{*}{Tok.} & \multirow{3}{*}{Mode}
      & \multicolumn{4}{c}{Graph Class.}
      & \multicolumn{2}{c}{Graph Reg.}
      & \multicolumn{1}{c}{Node Class.}
      & \multicolumn{1}{c}{Node Reg.} \\
    \cmidrule(lr){4-7}\cmidrule(lr){8-9}\cmidrule(lr){10-10}\cmidrule(lr){11-11}
      & & & BBBP & BACE & HIV & Tox21 & ZINC & EC-5 & MaxClq. & TopoOrd. \\
      & & &
      {\fontsize{3.5}{4}\selectfont ROC-AUC ($\uparrow$)} &
      {\fontsize{3.5}{4}\selectfont ROC-AUC ($\uparrow$)} &
      {\fontsize{3.5}{4}\selectfont ROC-AUC ($\uparrow$)} &
      {\fontsize{3.5}{4}\selectfont ROC-AUC ($\uparrow$)} &
      {\fontsize{3.5}{4}\selectfont MAE ($\downarrow$)} &
      {\fontsize{3.5}{4}\selectfont RSE ($\downarrow$)} &
      {\fontsize{3.5}{4}\selectfont F1 ($\uparrow$)} &
      {\fontsize{3.5}{4}\selectfont MAE ($\downarrow$)} \\
    \midrule

    DeepSet & \centering\ -- & \centering\ -- 
    & $61.37_{\scriptscriptstyle \pm 0.22}$ 
    & $75.22_{\scriptscriptstyle \pm 1.20}$ 
    & $74.82_{\scriptscriptstyle \pm 1.19}$ 
    & $73.79_{\scriptscriptstyle \pm 1.29}$ 
    & $0.701_{\scriptscriptstyle \pm 0.00}$ 
    & $0.766_{\scriptscriptstyle \pm 0.01}$ 
    & $9.805_{\scriptscriptstyle \pm 0.00}$ 
    & $0.252_{\scriptscriptstyle \pm 0.00}$ \\

    GIN & \centering\ -- & \centering\ -- 
    & $66.46_{\scriptscriptstyle \pm 2.15}$ 
    & $72.73_{\scriptscriptstyle \pm 0.36}$ 
    & $73.20_{\scriptscriptstyle \pm 0.89}$ 
    & $69.78_{\scriptscriptstyle \pm 6.63}$ 
    & $0.441_{\scriptscriptstyle \pm 0.02}$ 
    & $\mathbf{0.035}_{\scriptscriptstyle \pm 0.00}$ 
    & $20.04_{\scriptscriptstyle \pm 0.00}$ 
    & $0.322_{\scriptscriptstyle \pm 0.03}$ \\

    \midrule

    \multirow{8}{*}{GT}
      & \multirow{2}{*}{Lap} & Pad.   
      & $65.98_{\scriptscriptstyle \pm 1.16}$ 
      & $\mathbf{\textcolor{blue}{75.94}}_{\scriptscriptstyle \pm 0.25}$ 
      & $\textcolor{blue}{75.42}_{\scriptscriptstyle \pm 0.18}$ 
      & $\mathbf{\textcolor{blue}{81.36}}_{\scriptscriptstyle \pm 3.24}$ 
      & $0.678_{\scriptscriptstyle \pm 0.01}$ 
      & $0.098_{\scriptscriptstyle \pm 0.01}$ 
      & $9.805_{\scriptscriptstyle \pm 0.00}$ 
      & $0.246_{\scriptscriptstyle \pm 0.00}$ \\
      &                        & Trunc. 
      & $68.45_{\scriptscriptstyle \pm 1.39}$ 
      & $74.91_{\scriptscriptstyle \pm 1.64}$ 
      & $73.15_{\scriptscriptstyle \pm 1.17}$ 
      & $76.10_{\scriptscriptstyle \pm 1.78}$ 
      & $0.650_{\scriptscriptstyle \pm 0.01}$ 
      & $0.077_{\scriptscriptstyle \pm 0.02}$ 
      & $3.937_{\scriptscriptstyle \pm 5.42}$ 
      & $0.239_{\scriptscriptstyle \pm 0.00}$ \\

    \cmidrule(lr){2-11}

      & \multirow{2}{*}{RW} & Pad.   
      & $67.05_{\scriptscriptstyle \pm 1.33}$ 
      & $70.61_{\scriptscriptstyle \pm 0.71}$ 
      & $72.75_{\scriptscriptstyle \pm 1.68}$ 
      & $76.30_{\scriptscriptstyle \pm 0.83}$ 
      & $\mathbf{\textcolor{blue}{0.342}}_{\scriptscriptstyle \pm 0.03}$ 
      & $0.279_{\scriptscriptstyle \pm 0.01}$ 
      & $10.78_{\scriptscriptstyle \pm 8.59}$ 
      & $0.252_{\scriptscriptstyle \pm 0.00}$ \\
      &                        & Trunc. 
      & $67.04_{\scriptscriptstyle \pm 1.25}$ 
      & $70.87_{\scriptscriptstyle \pm 3.37}$ 
      & $74.08_{\scriptscriptstyle \pm 1.39}$ 
      & $75.63_{\scriptscriptstyle \pm 5.10}$ 
      & $0.524_{\scriptscriptstyle \pm 0.07}$ 
      & $0.316_{\scriptscriptstyle \pm 0.01}$ 
      & $4.807_{\scriptscriptstyle \pm 7.36}$ 
      & $0.252_{\scriptscriptstyle \pm 0.00}$ \\

    \cmidrule(lr){2-11}

      & \multirow{2}{*}{Adj} & Pad.   
      & $67.90_{\scriptscriptstyle \pm 2.35}$ 
      & $74.88_{\scriptscriptstyle \pm 2.21}$ 
      & $71.31_{\scriptscriptstyle \pm 1.26}$ 
      & $76.70_{\scriptscriptstyle \pm 2.68}$ 
      & $0.562_{\scriptscriptstyle \pm 0.00}$ 
      & $\textcolor{blue}{0.051}_{\scriptscriptstyle \pm 0.00}$ 
      & $\textcolor{blue}{25.20}_{\scriptscriptstyle \pm 1.35}$ 
      & $\textcolor{blue}{0.196}_{\scriptscriptstyle \pm 0.00}$ \\
      &                        & Trunc. 
      & $\mathbf{\textcolor{blue}{70.14}}_{\scriptscriptstyle \pm 0.70}$ 
      & $75.26_{\scriptscriptstyle \pm 1.88}$ 
      & $71.30_{\scriptscriptstyle \pm 0.77}$ 
      & $74.43_{\scriptscriptstyle \pm 7.78}$ 
      & $0.689_{\scriptscriptstyle \pm 0.01}$ 
      & $\textcolor{blue}{0.051}_{\scriptscriptstyle \pm 0.00}$ 
      & $6.303_{\scriptscriptstyle \pm 5.58}$ 
      & $0.230_{\scriptscriptstyle \pm 0.00}$ \\

    \cmidrule(lr){2-11}

      & \multirow{2}{*}{Comb.} & Pad.   
      & $68.52_{\scriptscriptstyle \pm 2.89}$ 
      & $72.49_{\scriptscriptstyle \pm 0.89}$ 
      & $73.10_{\scriptscriptstyle \pm 3.19}$ 
      & $76.89_{\scriptscriptstyle \pm 4.17}$ 
      & $0.392_{\scriptscriptstyle \pm 0.03}$ 
      & $0.051_{\scriptscriptstyle \pm 0.00}$ 
      & $\mathbf{26.73}_{\scriptscriptstyle \pm 1.99}$ 
      & $\mathbf{0.192}_{\scriptscriptstyle \pm 0.00}$ \\
      &                        & Trunc. 
      & $68.99_{\scriptscriptstyle \pm 1.69}$ 
      & $74.78_{\scriptscriptstyle \pm 1.33}$ 
      & $\mathbf{76.44}_{\scriptscriptstyle \pm 1.70}$ 
      & $74.63_{\scriptscriptstyle \pm 1.50}$ 
      & $0.590_{\scriptscriptstyle \pm 0.02}$ 
      & $0.052_{\scriptscriptstyle \pm 0.01}$ 
      & $2.760_{\scriptscriptstyle \pm 1.84}$ 
      & $0.219_{\scriptscriptstyle \pm 0.01}$ \\

    \bottomrule
  \end{tabular}
\end{table}

In this section, we complement our theoretical results with experiments on real-world graphs and synthetic tasks, aiming to illustrate the task-dependent nature of tokenization quality, and whether transformers can learn to convert between tokenization families in practice.

\textbf{Real-world datasets.}

We use three datasets from GraphBench~\citep{stoll2026graphbenchnextgenerationgraphlearning}: Max Clique \textsc{hard} (MaxClq.), Topological Ordering \textsc{easy} (TopOrd.), and Electronic Circuits (EC-5). We additionally use five molecular datasets: \texttt{ogbg-molhiv} (HIV), \texttt{ogbg-molbbbp} (BBBP), \texttt{ogbg-molbace} (BACE), \texttt{ogbg-moltox21} (Tox21) from Open Graph Benchmark~\citep{hu2020open}, and ZINC-subset~\citep{irwin2012zinc,dwivedi2022benchmarking}. Further details on the datasets, tasks, and evaluation metrics are provided in \Cref{appen:empirical_app}.

\textbf{Setting.}
We evaluate a Transformer Encoder with the three inductive tokenizations Lap, RW and Adj, as well as a transformer provided with all tokenization together, but in full and truncated versions. For each we evaluate both the full tokenization with padding to the maximal graph size in the dataset, and the truncated tokenization.
As baselines we also evaluate a DeepSet~\citep{10.5555/3294996.3295098} and a GNN (GIN)~\citep{xu2018how}.
Across all datasets, we search over depths $L \in \{1, 2, 5, 10\}$ and widths $\{64, 128, 256\}$. The node features are concatenated to each tokenization. In the truncation experiments, we used a truncation dimension of $8$ for all datasets. For the random projection in the Adj tokenization we sample $\mathbf{R}_{ij} \sim \mathcal{N}(0,1)$. All experiments were run on NVIDIA V100 (32\,GB) and A100 (80\,GB) GPUs.

\paragraph{Results.} 
The real-world results in \Cref{tab:empirical_results_all} support the theoretical view that tokenizations expose
different structural information and induce different empirical advantages. On tasks governed by explicit local constraints,
adjacency-based tokenization is favored. This is most evident on MaxClq. and TopoOrd., where padded adjacency is the best single tokenization. Both tasks depend on exact edge constraints: clique membership requires verifying pairwise connections, while node ordering depends on precedence constraints imposed by directed edges. Adjacency exposes these constraints directly, whereas spectral and random-walk tokens provide only global or diffusive summaries.
The molecular benchmarks illustrate the complementary side of this trade-off. On Tox21, full Laplacian tokenization
performs best, consistent with spectral tokens exposing global graph geometry
and being useful when the target depends on graph-level organization rather than individual edges. Finally, on HIV, MaxClq., and TopoOrd.,
the combined tokenization achieves the best overall performance, suggesting that access to multiple structural views helps the transformer exploit topology at different granularities. Together, these results reinforce that the best tokenization depends on the structural nature of the downstream task.

  \begin{figure}[t]                                                                                                                                                                                                                                                          
    \centering                                                                                                                                                                                                                                                               
    \begin{subfigure}[t]{0.48\textwidth}                          
      \centering                                                                                                                                                                                                                                                             
      \begin{tikzpicture}                                         
      \begin{semilogxaxis}[                                                                                                                                                                                                                                                  
          width=\linewidth,                                       
          height=3.2cm,                                                                                                                                                                                                                                                      
          xlabel={$n$},                                                                                                                                                                                                                                                      
          ylabel={Accuracy},                                                                                                                                                                                                                                                 
          xmin=12, xmax=256,                                                                                                                                                                                                                                                 
          ymin=0.4, ymax=1.08,                                                                                                                                                                                                                                               
          xtick={16,64,256},                                                                                                                                                                                                                                                 
          xticklabels={16,64,256},                                                                                                                                                                                                                                           
          log basis x={2},                                                                                                                                                                                                                                                   
          grid=major,                                             
          grid style={gray!20},                                                                                                                                                                                                                                              
          every axis plot/.append style={thick},                  
          label style={font=\tiny},
          tick label style={font=\tiny},
      ]                                                                                                                                                                                                                                                                      
      \addplot[color=lapcolor, mark=square*, mark size=1.4pt]                                                                                                                                                                                                                
          coordinates {(16, 1.0) (32, 1.0) (64, 1.0) (128, 1.0) (256, 1.0)};                                                                                                                                                                                                 
      \addplot[color=adjcolor, mark=o, mark size=1.4pt]                                                                                                                                                                                                                      
          coordinates {(16, 1.0) (32, 1.0) (64, 0.8533) (128, 0.4867) (256, 0.4889)};
      \addplot[color=adjcolor, mark=triangle*, mark size=1.4pt, dashed]                                                                                                                                                                                                      
          coordinates {(16, 0.9956) (32, 0.9956) (64, 1.0) (128, 0.4956) (256, 0.4933)};                                                                                                                                                                                     
      \addplot[color=adjcolor, mark=diamond*, mark size=1.4pt, dotted]                                                                                                                                                                                                       
          coordinates {(16, 1.0) (32, 0.9978) (64, 0.9978) (128, 0.4867) (256, 0.4889)};                                                                                                                                                                                     
      \end{semilogxaxis}                                                                                                                                                                                                                                                     
      \end{tikzpicture}                                                                                                                                                                                                                                                      
      \vspace{-6pt}                                                                                                                                                                                                                                                          
      \caption{Connectivity classification}                                                                                                                                                                                                                                  
      \label{fig:exp61-vs-n}                                                                                                                                                                                                                                                 
    \end{subfigure}                                                                                                                                                                                                                                                          
    \hfill                                                        
    \begin{subfigure}[t]{0.48\textwidth}                                                                                                                                                                                                                                     
      \centering                                                  
      \begin{tikzpicture}
      \begin{semilogxaxis}[                                                                                                                                                                                                                                                  
          width=\linewidth,                                                                                                                                                                                                                                                  
          height=3.2cm,                                                                                                                                                                                                                                                      
          xlabel={Walk length $k$},                                                                                                                                                                                                                                          
          ylabel={\tiny MAE\,/\,$\mathbb{E}[\text{target}]$},     
          xmin=1.5, xmax=10,
          ymin=0.03, ymax=0.22,                                                                                                                                                                                                                                              
          xtick={2,4,8},
          xticklabels={2,4,8},                                                                                                                                                                                                                                               
          log basis x={2},                                        
          grid=major,                                                                                                                                                                                                                                                        
          grid style={gray!20},                                                                                                                                                                                                                                              
          every axis plot/.append style={thick},
          label style={font=\tiny},                                                                                                                                                                                                                                          
          tick label style={font=\tiny},                          
          clip mode=individual,                                                                                                                                                                                                                                              
      ]                                                           
      \addplot[color=adjcolor, mark=o, mark size=1.4pt]
          coordinates {(2, 0.1077) (4, 0.1795) (8, 0.1972)};                                                                                                                                                                                                                 
      \addplot[color=adjcolor, mark=triangle*, mark size=1.4pt, densely dashed]
          coordinates {(2, 0.1055) (4, 0.1627) (8, 0.1794)};                                                                                                                                                                                                                 
      \addplot[color=adjcolor, mark=star, mark size=2pt, dotted]                                                                                                                                                                                                             
          coordinates {(2, 0.1046) (4, 0.1579) (8, 0.1688)};                                                                                                                                                                                                                 
      \addplot[color=lapcolor, mark=square*, mark size=1.4pt]                                                                                                                                                                                                                
          coordinates {(2, 0.0622) (4, 0.0822) (8, 0.0671)};                                                                                                                                                                                                                 
      \addplot[color=lapcolor, mark=triangle*, mark size=1.4pt, densely dashed]                                                                                                                                                                                              
          coordinates {(4, 0.0736) (8, 0.0592)};                                                                                                                                                                                                                             
      \end{semilogxaxis}                                                                                                                                                                                                                                                     
      \end{tikzpicture}                                           
      \vspace{-6pt}                                                                                                                                                                                                                                                          
      \caption{RW recovery from adjacency}                        
      \label{fig:exp69-adj-vs-k}                                                                                                                                                                                                                                             
    \end{subfigure}                                                                                                                                                                                                                                                          
    \\[4pt]                                                                                                                                                                                                                                                                  
    {\small                                                                                                                                                                                                                                                                  
    \begin{tikzpicture}                                                                                                                                                                                                                                                      
      \draw[lapcolor,thick] (0,0) -- (0.4,0) node[pos=0.5,mark size=1.4pt]{\pgfuseplotmark{square*}};                                                                                                                                                                        
      \node[right,font=\tiny] at (0.45,0) {$P_\text{Lap}$\,$L{=}1$};                                                                                                                                                                                                         
      \draw[lapcolor,thick,densely dashed] (2.2,0) -- (2.6,0) node[pos=0.5,mark size=1.4pt]{\pgfuseplotmark{triangle*}};                                                                                                                                                     
      \node[right,font=\tiny] at (2.65,0) {$P_\text{Lap}$\,$L{=}3$};                                                                                                                                                                                                         
      \draw[adjcolor,thick] (4.4,0) -- (4.8,0) node[pos=0.5,mark size=1.4pt]{\pgfuseplotmark{o}};                                                                                                                                                                            
      \node[right,font=\tiny] at (4.85,0) {$P_\text{Adj}$\,$L{=}1$};                                                                                                                                                                                                         
      \draw[adjcolor,thick,dashed] (6.6,0) -- (7.0,0) node[pos=0.5,mark size=1.4pt]{\pgfuseplotmark{triangle*}};                                                                                                                                                             
      \node[right,font=\tiny] at (7.05,0) {$P_\text{Adj}$\,$L{=}3/5$};                                                                                                                                                                                                       
      \draw[adjcolor,thick,dotted] (8.8,0) -- (9.2,0) node[pos=0.5,mark size=1.4pt]{\pgfuseplotmark{diamond*}};                                                                                                                                                              
      \node[right,font=\tiny] at (9.25,0) {$P_\text{Adj}$\,$L{=}7$};                                                                                                                                                                                                         
    \end{tikzpicture}                                                                                                                                                                                                                                                        
    }                                                                                                                                                                                                                                                                        
    \vspace{-5pt}                                                                                                                                                                                                                                                            
    \caption{Synthetic experiments validating theoretical depth separations. (a) $P_\text{Lap}$ solves connectivity with $L{=}1$ across all graph sizes, while $P_\text{Adj}$ collapses to chance for $n \geq 128$ (\Cref{thm:connectivity_bound}). (b) $P_\text{Adj}$       
  normalized error grows with walk length $k$, confirming the $\Omega(\log k)$ depth requirement (\Cref{thm:adj_k_closed_walk}); $P_\text{Lap}$ achieves lower error throughout.}                                                                                            
    \label{fig:synthetic-experiments}                                                                                                                                                                                                                                        
  \end{figure} 

\paragraph{Connectivity classification.}
In this experiment, we empirically verify Theorem~\ref{thm:connectivity_bound}.
We train a transformer with $L\in\{1,5,7\}$ layers on the task of determining
whether a graph is connected, using bridge-pair graphs of varying sizes
$n\in\{16,32,64,128,256\}$. Each instance contains two dense
Erd\H{o}s--R\'enyi subgraphs that are either connected by three
inter-component edges or made disconnected by rewiring these edges within the
components. \Cref{fig:exp61-vs-n} shows that
$\mathcal{P}_{\mathrm{Lap}}$ solves the problem perfectly already with
$L=1$ across all graph sizes, whereas $\mathcal{P}_{\mathrm{Adj}}$ collapses
to near-random performance for larger graphs.

\paragraph{Random Walk recovery from other tokenizations.} 

  We test the depth lower bound from \Cref{thm:adj_k_closed_walk} by training the transformer to predict single return probabilities $(T^k){ii}$ from adjacency tokens, sweeping $L$ between $1$ and $5$ for each $k \in \{2,4,8\}$. We 
  normalize MAE by $\mathbb{E}[(T^k)_{ii}]$ to account for shrinking targets. As shown in Figure~\ref{fig:exp69-adj-vs-k}, adjacency relative error nearly doubles from $k{=}2$ to $k{=}8$, with the depth gap widening from $3\%$ to    
  $14\%$, consistent with the $\Omega(\log k)$ bound. Laplacian tokens remain flat at $6$–$8\%$ relative error.

\section{Conclusion}
In this work, we have shown that graph tokenization is a fundamental component of transformer expressivity, not merely a preprocessing detail. Our results establish a strict dichotomy: adjacency tokenization favors local structure but is depth-limited for global
  tasks, spectral tokenization captures global topology but is ill-conditioned for local queries, and random-walk tokenization provides efficient diffusion statistics but is provably lossy — unable to recover the graph regardless of walk length. We further show that these gaps cannot be bridged by converting between tokenizations at limited depth, and that truncation introduces additional fragility. No single tokenization dominates, and our experiments confirm this: on real-world benchmarks, combining complementary tokenizations often improves performance by allowing the transformer to draw on distinct structural signals.

Several directions remain open. First, extending our node-level analysis to edge-level tokenizations, common in molecular and relational domains, may reveal additional trade-offs. Second, our ill-conditioning result suggests that some expressively sufficient tokenizations may still be harder to optimize, motivating a systematic study of tokenization through optimization and trainability. Finally, while algebraic constraints suggest an $\Omega(\log k)$ depth lower bound for recovering Random Walk statistics from Laplacian tokens under linear attention and MLP, whether non-linear transformers can bypass this bottleneck, potentially at the cost of significant width or numerical precision, remains open.

\bibliographystyle{plainnat}
\bibliography{references}

\appendix

\section{Proofs from \Cref{sec:theoretical_results}}\label{appen:proofs_section}

\subsection{Proofs from \Cref{subsec:random_walk_theory}}\label{appen:proofs_from_rw_theory}

We first show that a 2-layer transformer with a random walk tokenizatio can solve the $k$-\textsc{closed-walk detection}.

\begin{theorem}
    There exists a transformer with random walk tokenization of length $k$ that solves the odd $k$-cycle detection problem with $O(1)$ layers.
\end{theorem}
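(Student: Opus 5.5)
The plan is to reduce detection to a single scalar test on each token and then aggregate over nodes with one attention layer. We interpret the task as deciding whether $G$ contains a closed walk of length $k$; for odd $k$ this is equivalent to $G$ containing an odd cycle of length at most $k$. Indeed, any closed walk of odd length contains an odd cycle of no greater length, and conversely an odd cycle of length $\ell \le k$ can be padded by back-and-forth traversals of an edge to a closed walk of length exactly $k$.

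\textbf{Step 1: reduce to a per-node positivity test.} For node $v$,
\[
(\mathbf{P}^k)_{vv}=\sum_{v=w_0,w_1,\dots,w_k=v}\prod_{i=0}^{k-1}\frac{1}{d_{w_i}},
\]
a sum over closed walks of length $k$ at $v$ in which every term is strictly positive. Hence $(\mathbf{P}^k)_{vv}>0$ if and only if some closed $k$-walk passes through $v$, and $G$ is a yes-instance if and only if $\max_v (\mathbf{P}^k)_{vv}>0$. The value $(\mathbf{P}^k)_{vv}$ is the last coordinate of $\mathbf{x}_v$, so the linear input projection can copy it into a designated hidden coordinate.

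\textbf{Step 2: quantitative gap and local thresholding.} Under $p$-bit precision, positivity must be separated from zero. If a closed walk exists at $v$, then $(\mathbf{P}^k)_{vv}\ge \prod_i d_{w_i}^{-1}\ge n^{-k}$. Otherwise it equals $0$ exactly. The position-wise MLP of the first layer, which is applied after the residual stream already carries the copied coordinate, then computes
\[
b_v=\min\bigl(1,\; n^{k}(\mathbf{P}^k)_{vv}\bigr)\in\{0,1\}.
\]
This is a difference of two ReLUs, so it is exact. The attention sublayer of this layer can be set to zero.

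\textbf{Step 3: global aggregation.} In the second layer, a single head with $W_Q=W_K=0$ attends uniformly and outputs $\bar b=\frac1n\sum_v b_v$ at every token. We have $\bar b\ge 1/n$ on yes-instances and $\bar b=0$ otherwise. A final MLP, $\min(1,n\bar b)$, or directly the linear readout with threshold $1/(2n)$, produces the answer. The construction therefore uses two layers, which is $O(1)$ as claimed.

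\textbf{Main obstacle.} The delicate step is the precision and scale bookkeeping in Step 2. The threshold weight $n^k$ must be representable, or equivalently we need $p=\Omega(k\log n)$ bits to distinguish $n^{-k}$ from $0$. I would state this precision assumption explicitly. A tighter lower bound such as $d_{\max}^{-k}$ is available, but it does not remove the need for $k\log$-many bits. If fixed weights independent of $n$ are required, I would instead use a sharp ReLU bump whose scale is tied to the known precision $p$, since any nonzero representable value is at least $2^{-p}$.
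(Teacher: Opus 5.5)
Your proposal is correct and follows essentially the same route as the paper: zero out the first attention sublayer, threshold the $k$-th return probability at $\epsilon = n^{-k}$ with a difference of two ReLUs, then use a second layer with $W_Q = W_K = 0$ to take the uniform average over nodes and threshold it at $1/n$. Two points you make explicit are left implicit in the paper's proof: that the construction really detects closed $k$-walks (for odd $k$, an odd cycle of length at most $k$), and that it needs $p = \Omega(k\log n)$ bits of precision.
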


\begin{proof}
    The construction of a transformer where each token, representing a node, outputs whether there exists an odd cycle of length $k$ starting and ending at this node. We construct the self-attention layer so that $W_V = 0$. Hence, the output of each node is just itself, given from the residual connection. We then use the MLP to threshold this probability. Note that the probability of having a cycle of length $k$ in a graph with n nodes is bounded below by $\epsilon:= \frac{1}{n^k}$. We use the following neuron on the $k$-th entry of the random walk:
    \[
    x\mapsto \text{ReLU}(x/\epsilon) - \text{ReLU}(x/\epsilon - 1)~,
    \]
    which is equal to $1$ if $x \geq \epsilon$ and $0$ if $x=0$.

    By applying a second self-attention layer with zeroed query and key matrices (i.e., $W_Q = W_K = 0$), the softmax produces uniform attention weights across all nodes, hence it computes the global average of the node states. We can then use the subsequent MLP to threshold this global average, outputting $1$ if the mean is $\geq \frac{1}{n}$ and $0$ otherwise, successfully aggregating the local node detections into a single graph-level answer
\end{proof}

\subsubsection{Proof of \Cref{thm:adj_k_closed_walk}}

\begin{proof}
The proof follows by a reduction from calculating the number of closed walks of size $k$ to the word problem on the group $S_5$, with $k$ words. The input to the transformer is the adjacency matrix $A$, where each token represents a row of $A$.
We define the word problem on $S_5$ in the following way: Given $k$ permutations $\sigma_i\in S_5$ and two indices $s,t\in[5]$, decide whether after applying $\sigma_{k}\circ\cdots\circ\sigma_1$ on $s$ outputs $t$.

To show the reduction, for each permutation $\sigma_i$, we map it to a permutation matrix $M_i\in\{0,1\}^{5\times 5}$. We also define a matrix $P$ where its $(s,t)$-th entry is $1$ and all other entries are $0$.

Next, we construct a $k$-partite graph with $5k$ nodes. The nodes are partitioned into $k$ disjoint sets $V_1,\dots,V_k$, each of size $5$. We assign edges only between sets $V_i$ and $V_{i+1}$ for $i\in[k-1]$ using the corresponding matrix $M_i$. Finally, we assign a single edge between between node $s$ in $V_1$ and node $t$ in $V_k$. We now describe the adjacency matrix for this graph. Let $u$ be the $i$-th node of $V_m$ and $v$ be the $j$-th node of $V_{m+1}$, then $A_{uv} = A_{vu} = (M_m)_{i,j}$. We now have that the number of closed walks of size $k$ in the graph is exactly equal to:

\begin{equation}
C_k(G) = \mathrm{Tr}(M_1 \dots M_k) = P_{s,t}~.
\end{equation}

Given the transformer with adjacency matrix $A$ as it inputs with polynomial width and depth $O(1)$, by \cite{merrill2022saturated} this model is bounded by the expressive power of the class $\mathsf{TC}^0$. However, solving the task in this graph, by the reduction, also outputs whether node $s$ in $V_1$ has an odd cycle of length $k$ starting and ending with it, which in turn solves the word problem on $S_5$. The word problem on $S_5$ is $\mathsf{NC}^1$-complete by Barrington's theorem, contradicting the conjecture that $\mathsf{TC}^0 \subsetneq \mathsf{NC}^1$. Hence, to solve this problem, the depth of the transformer must be bounded by $\Omega(\log(k))$. 
\end{proof}

\subsubsection{Proof of \Cref{thm:rw_planarity}}

To prove the theorem, we explicitly construct two graphs $G$ and $G'$, where one is planar, and the other is not, and both graphs have the exact same random walk tokenization. 
To construct such graphs, we use the Godsil-McKay (GM) switching \citep{godsil1982constructing}, which yields two non-isomorphic graphs with the same spectrum. 

We first formally introduce the GM switching method. Let $G=(V,E)$ be an undirected graph. We say that a set of nodes $S \subseteq V$ is a switching set of size $k$ if the induced subgraph on $S$ is regular, and every $v \notin S$ has exactly $0, \frac{k}{2},$ or $k$ neighbors in $S$. The GM switched graph $G'$ is formed by taking every node outside $S$ with exactly $\frac{k}{2}$ neighbors in $S$ and swapping its edges to its non-neighbors in $S$. All other edges remain unchanged.

We will now show that the GM switching on $G$ produces a graph $G'$ with the same random walk tokenization. We partition the adjacency matrix $A$ of $G$ into three blocks: $A_S$ (edges strictly inside $S$), $A_O$ (edges strictly outside $S$), and $B$ (the cross-edges between them):
\begin{equation}
    A = \begin{pmatrix} A_S & B \\ B^\top & A_O \end{pmatrix}
\end{equation}
We define the orthogonal transformation matrix $Q = \begin{pmatrix} \frac{2}{k}J_k - I_k & 0 \\ 0 & I_{n-k} \end{pmatrix}$, where $J_k$ is the all-ones matrix. Because $J_k^2 = k J_k$, we have $Q = Q^\top = Q^{-1}$. Also denote by $Q_S:= \frac{2}{k}J_k-I_k$. The adjacency matrix of the transformed graph $G'$ is equal to $A':= QAQ = \begin{pmatrix} Q_S A_S Q_S & Q_S B \\ B^\top Q_S & A_O \end{pmatrix}$. To see this, first note that since $A_S$ is an adjacency of a regular graph (the induced subgraph of $G$ on the subset $S$), then $A_S$ commutes with $J_k$. This means that $Q_S A_S Q_S = Q_S^2 A_S = A_S$. Now, let $b_v$ be a column of the matrix $B$ representing the cross edges between node $v$ outside of $S$ and the nodes in $S$. We have that $Q_Sb_v = \frac{2c_v}{k}\mathbf{1} - b_v$ where $c_v$ is the number of neighbors of $v$ in $S$. If $c_v=0$ then $b_v$ is the zero vector since there are no edges between $v$ and the nodes in $S$, hence $Q_S b_v=\mathbf{0}$. If $c_v=k$ then $Q_Sb_v = b_v$, and in both cases $b_v$ doesn't change. If $c_v = \frac{k}{2}$ then $Q_Sb_v = \mathbf{1}-b_v$, which flips the $0$'s and $1$'s, hence performing the switch. This shows that $A'$ is indeed the adjacency matrix of $G'$.

We will now show that the random walk tokenization of $G$ and $G'$ is identical. These tokenizations for walks of length $m$ on node $i$ are equal to $\left((D^{-1}A)^m\right)_{ii}$, where $D$ is the degree matrix for $G$. Note that $Q$ commutes with $D$ since $G$ is regular on $S$. Also, note that the degree matrix for $G'$ is the same as the one for $G$, since the switching hasn't changed the degree of any node. Hence we have that:
\begin{equation*}
    (D^{-1}A')^m = (D^{-1}QAQ)^m = (QD^{-1}AQ)^m = Q(D^{-1}A)^m Q~.
\end{equation*}
Finally, for any $i\in[n]$ we have that:
\begin{equation*}
    \left(Q(D^{-1}A)^m Q\right)_{ii} = e_i^\top Q(D^{-1}A)^m Q e_i = \left((D^{-1}A)^m\right)_{ii}~.
\end{equation*}

We now construct a specific graph $G$ which is mathematically planar, such that performing the GM switching on a subset $S\subseteq V$ yields a non-planar graph $G'$. Let $G = (V,E)$ consist of $12$ nodes: a switching set $S=\{s_1, s_2, s_3, s_4\}$, a set of active outside nodes $O=\{u_1, u_2, u_3, u_4, u_5, u_6\}$, and a set of inert nodes $X=\{x_1, x_2\}$. Because $S$ contains no internal edges, its induced subgraph is trivially $0$-regular.

To satisfy the GM constraints, every node in $O$ connects to exactly $2$ nodes in $S$ ($|S|/2 = 2$), and every node in $X$ connects to exactly $0$ nodes in $S$. The edges between $O$ and $S$ are:
\begin{enumerate}
    \item $u_1 \sim \{s_1, s_2\}$ and $u_2 \sim \{s_3, s_4\}$
    \item $u_3 \sim \{s_1, s_3\}$ and $u_4 \sim \{s_2, s_4\}$
    \item $u_5 \sim \{s_1, s_4\}$ and $u_6 \sim \{s_2, s_3\}$
\end{enumerate} 
To complete the topology of $G$, we add the following planar structural edges strictly outside of $S$:
\begin{enumerate}
    \item $u_1 \sim \{u_3, u_4, u_5, u_6, x_2\}$
    \item $u_2 \sim \{u_3, u_4, u_5, u_6, x_1\}$
    \item $u_3 \sim \{u_5, u_6, x_2\}$
    \item $u_4 \sim \{u_5, u_6, x_1\}$
    \item $u_6 \sim \{x_1, x_2\}$
\end{enumerate}

By automated topological verification, graph $G$ contains no Kuratowski subgraphs and is entirely planar. It can also be seen in \Cref{fig:planar_graph_comparison} that $G$ is planar.

We perform the GM switch to create $G'$, swapping the edges between $O$ and $S$ for their non-edges. Crucially, every node $u_i \in O$ drops exactly $2$ edges and gains exactly $2$ edges. Furthermore, because every $s_i \in S$ was initially connected to exactly $3$ nodes in $O$, it drops $3$ edges and gains the other $3$. Therefore, the total degree of every single node is strictly invariant, and the degree matrices are identical ($D=D'$).

While the spectral signatures are perfectly conserved, the topology undergoes a phase transition. By automated verification via the Left-Right Planarity Test, the switched graph $G'$ strictly contains a Kuratowski minor, violating planarity.

Because the Transformer is parameterized strictly upon the Random Walk tokens, which are identical for both graphs, the model receives identical inputs for structurally divergent topologies. Therefore, no Transformer utilizing solely Random Walk tokenization can deterministically resolve graph planarity.

\begin{figure}[t]
    \centering
    \begin{subfigure}[b]{0.48\textwidth}
        \centering
        \includegraphics[width=\textwidth]{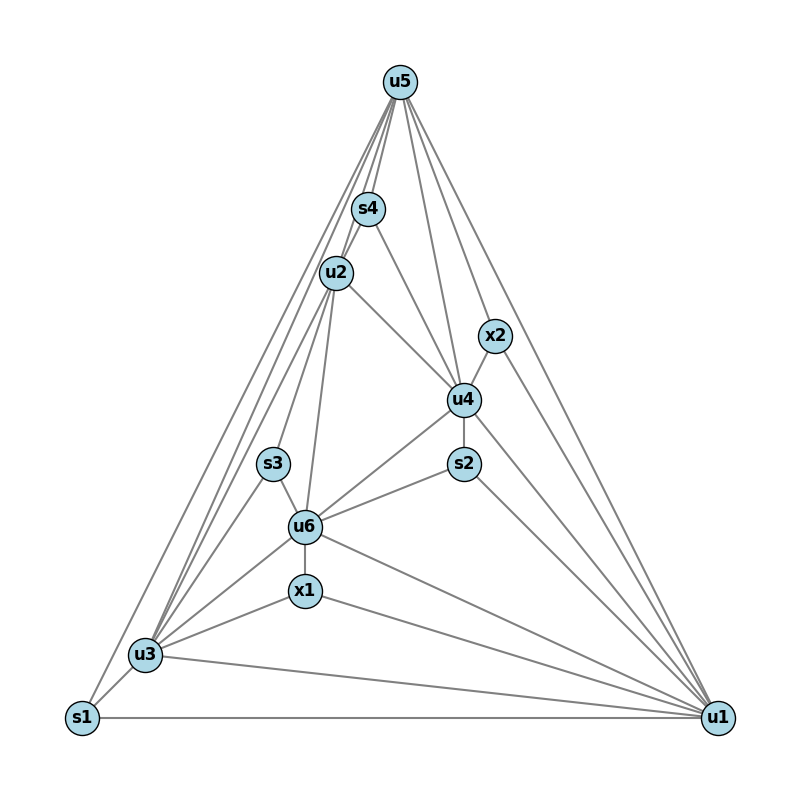}
        \caption{The base planar graph $G$.}
    \end{subfigure}
    \hfill
    \begin{subfigure}[b]{0.48\textwidth}
        \centering
        \includegraphics[width=\textwidth]{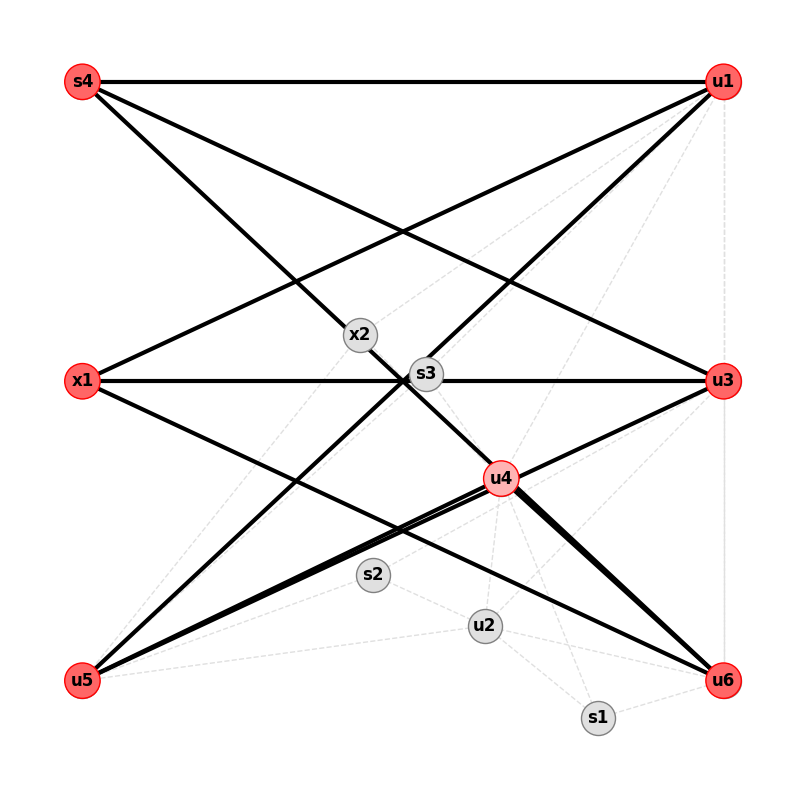}
        \caption{The GM switched graph, $G'$, highlighting the $K_{3,3}$ bipartite minor.}
    \end{subfigure}
    \caption{A comparison of the original graph and its switched counterpart, illustrating the structural switch and the explicit extraction of a non-planar minor.}
    \label{fig:planar_graph_comparison}
\end{figure}

\subsection{Proof of \Cref{thm:truncation_brittleness} (Laplacian)}\label{appen:proof_lap_trunc}

\begin{proof}
By \Cref{lem:adding_an_edge}, adding an edge between symmetric nodes $u$ and $v$ preserves $n-1$ eigenvalues and eigenvectors, while changing the last one. Furthermore, because $u$ and $v$ share exactly $d$ common neighbors, adding the edge $(u,v)$ creates exactly $d$ new triangles. Thus, after adding an edge and assuming $d > 0$, the newly created graph has strictly more triangles than the original one. To prove the theorem, we construct specific graphs to hide the single diverging eigenvalue from the transformer's input.

\textbf{Case 1: Taking the smallest eigenvalues} \\
Assume $T$ uses the $k$ smallest eigenvalues for any $k\leq n-2$. Let $G_1 = K_{2, n-2}$ be a complete bipartite graph where the partitions have sizes $2$ and $n-2$. Denote the two nodes in the first partition as $u$ and $v$, and note that they have identical neighborhoods of size $n-2$. 

By Section 1.4.2 from \cite{brouwer2012spectra}, the Laplacian spectrum of $K_{2,m}$ has the eigenvalues of $0,n-2$ and $n$ with multiplicity $1$ and $2$ with multiplicity $n-3$. Let $G_2$ be the graph constructed from $G_1$ by adding the edge $(u,v)$. By \Cref{lem:adding_an_edge} the only eigenvalue that changed is $n$ to $n+2$, the rest of the eigenvalues and eigenvectors remain the same. Thus, taking only the smaller $k$ eigenvalues and eigenvectors with $k\leq n-2$ provides the transformer with the same input for $G_1$ and $G_2$, although $G_2$ has $n-2$ triangles while $G_1$ contains none.

\textbf{Case 2: Taking the largest eigenvalues} \\
Assume $T$ uses the $k$ largest eigenvalues for any $k\leq n-2$. We define the graph $G_2$ as the clique $K_{n-2}$ with the addition of two nodes $u,v$, which are both connected to each node in the clique, but not connected between them. This graph is the join operation of $K_{n-2}$ and $\bar{K_2}$. By Section 1.4.1 in \cite{brouwer2012spectra} and Theorem 2.20 in \cite{merris1994laplacian} we can exactly calculate the Laplacian spectrum of $G_1$. It includes the eigenvalues $0$ and $n-2$, each with multiplicity $1$, and $n$ with multiplicity $n-2$.

We now create $G_2$ by adding the edge $(u,v)$, note that $u$ and $v$ share the exact same neighborhoods. By \Cref{lem:adding_an_edge} the only changed eigenvalue shifts from $n-2$ to $n$, and the spectrum of $G_2$ is $0$ with multiplicity $1$ and $n$ with multiplicity $n-1$. Also note that $G_2$ contains $n-2$ more triangles than $G_1$. Thus, if the input to the transformer is only the $k$ largest eigenvectors and eigenvalues with $k\leq n-2$, it cannot detect the newly created triangles in $G_2$.

\end{proof}

To prove the theorem we need the following lemma:

\begin{lemma}\label{lem:adding_an_edge}
Let $G_1 = (V,E)$ be a graph containing two non-adjacent nodes $u, v$ that share identical neighborhoods, meaning $N(u) = N(v)$. Let their shared degree be $d = |N(u)|$. Define the graph $G_2$ to be the same as $G_1$, with the addition of a single edge $\{(u,v)\}$. Then the Laplacian matrices $L_1$ and $L_2$ share exactly $n-1$ identical eigenvalues and eigenvectors. The only diverging eigenvalue changes from $\lambda = d$ in $G_1$ to $\lambda = d+2$ in $G_2$.
\end{lemma}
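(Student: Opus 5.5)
The plan is to write $L_2 = L_1 + \Delta$ with the rank-one perturbation
\[
\Delta = (e_u - e_v)(e_u - e_v)^\top = w w^\top, \qquad w := e_u - e_v,
\]
and to show that $w$ is an eigenvector of $L_1$. Once that is established, everything orthogonal to $w$ is left untouched by $\Delta$, and the lemma follows.

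\textbf{Step 1: $w$ is an eigenvector of $L_1$ with eigenvalue $d$.} Compute $L_1 w = D_1 w - A_1 w$.
\begin{itemize}
\item Since $u$ and $v$ both have degree $d$, we get $D_1 w = d\,w$.
\item Next, $A_1 w = A_1 e_u - A_1 e_v = \mathbf{1}_{N(u)} - \mathbf{1}_{N(v)} = 0$, because $N(u) = N(v)$.
\item Here it matters that $u \notin N(v)$, which is exactly the non-adjacency assumption. It guarantees that $u$ and $v$ are not themselves among the coordinates where these indicator vectors are supported.
\end{itemize}
Hence $L_1 w = d\,w$.

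\textbf{Step 2: transfer the eigenbasis.} Because $L_1$ is symmetric and $w$ is an eigenvector, we can choose an orthonormal eigenbasis of $L_1$ containing $\hat w = w/\sqrt{2}$. Complete it with orthonormal eigenvectors $x_1, \ldots, x_{n-1}$, all orthogonal to $w$.
\begin{itemize}
\item For each $x_i$, we have $\Delta x_i = w(w^\top x_i) = 0$, so $L_2 x_i = L_1 x_i = \lambda_i x_i$. These are the $n-1$ shared eigenpairs.
\item For $\hat w$, we have $\Delta \hat w = w\,\|w\|^2/\sqrt{2} = 2\hat w$, so $L_2 \hat w = (d+2)\hat w$.
\end{itemize}
Therefore the spectrum changes only by replacing the eigenvalue $d$ (with eigenvector $\hat w$) by $d+2$, with the same eigenvector. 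A sanity check: the degrees of $u$ and $v$ each increase by $1$, and the $A$-entry $-1$ appears in positions $(u,v)$ and $(v,u)$, which gives exactly $ww^\top$.

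\textbf{Main obstacle: multiplicities.} The phrase ``share exactly $n-1$ identical eigenvalues and eigenvectors'' must be read with care, because the eigenvalue $d$ or $d+2$ may have multiplicity larger than one in $L_1$ or $L_2$. In that case, eigenvectors are only defined up to rotation within the eigenspace.

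I would resolve this by stating the result in terms of a specific jointly chosen orthonormal basis, namely $\{\hat w, x_1, \ldots, x_{n-1}\}$ from Step 2. Then:
\begin{itemize}
\item the multiset of eigenvalues of $L_2$ equals that of $L_1$ with one copy of $d$ replaced by $d+2$;
\item the eigenvectors $x_1, \ldots, x_{n-1}$, together with $\hat w$, are common to both matrices.
\end{itemize}
This is the form needed in the proof of \Cref{thm:truncation_brittleness}, where the truncated tokenization then coincides for the two graphs, up to the usual basis choice within repeated eigenvalues.
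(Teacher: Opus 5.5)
Your proof is correct and follows the same route as the paper's: you show $w = e_u - e_v$ is an eigenvector of $L_1$ with eigenvalue $d$, write $L_2 - L_1 = ww^\top$, and note that eigenvectors orthogonal to $w$ are unchanged while $w$ moves to eigenvalue $d+2$. Your explicit choice of a common orthonormal eigenbasis containing $\hat w$ also handles repeated eigenvalues, which the paper's proof does not address.
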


\begin{proof}
Let $L_1 = D - A$ be the Laplacian matrix of $G_1$$u,v\in[n]$ the nodes given in the theorem. We define the vector $x\in\mathbb{R}^n$ such that $x_u = 1$, $x_v = -1$, and $x_w = 0$ for all other nodes $w \notin \{u, v\}$. 

We evaluate the matrix-vector multiplication $L_1 x$ row by row:
\begin{itemize}
    \item \textbf{Row $u$:} $(L_1 x)_u = (L_1)_{uu} x_u + (L_1)_{uv} x_v = d$.
    \item \textbf{Row $v$:} $(L_1 x)_v = (L_1)_{vu} x_u + (L_1)_{vv} x_v = -d$.
    \item \textbf{Row $w$:} For any other node $w$, $(L_1 x)_w = (L_2)_{wu} - (L_2)_{wv}$. Because $N(u) = N(v)$, node $w$ is either connected to both or to neither, in both cases yielding $0$.
\end{itemize}
This shows that $(L_1 x)_i = d \cdot x_i$ holds for all $i\in[n]$, hence $x$ is an eigenvector of $L_1$ with eigenvalue $d$.

Adding the edge $(u,v)$ corresponds to the Laplacian perturbation $\Delta = L_1 - L_2$, where $\Delta_{uu} = 1, \Delta_{vv} = 1, \Delta_{uv} = -1,$ $\Delta_{vu} = -1$, and all other entries of $\Delta$ are $0$. Evaluating $\Delta x$ yields:
\begin{itemize}
    \item $(\Delta x)_u = 1+1 = 2$.
    \item $(\Delta x)_v = -1 - 1 = -2 $.
    \item For all other $w\in[n]$, $(\Delta x)_u = 0$
\end{itemize}
Thus, $\Delta x = 2x$. Hence, For this specific eigenvector $x$, multiplying by the new Laplacian yields $L_2 x = (L_1 + \Delta)x = d \cdot x + 2x = (d+2)x$. 

Finally, note that $\Delta = x x^\top$. Let $y$ be some other eigenvector of $L_1$ orthogonal to $x$, then $\Delta y = x x^\top y = \mathbf{0} $, which means that it is also an eigenvector of $L_2$ with the same eigenvalue. 
\end{proof}

\subsection{Proof of \Cref{thm:truncation_brittleness} (Adjacency)}\label{appen:proof_adj_trunc}

\begin{proof}

    Our proof relies on a communication complexity lower bound for the set disjointness problem, and is similar to the arguments from \cite{sanford2024representational,yehudai2024can}. The lower bound for communication complexity is the following: Alice and Bob are given inputs $a,b\in\{0,1\}^s$ respectively, and their goal is to find $\max a_ib_i$ by sending single bit-messages to each other in a sequence of communication rounds. The lower bound says that any deterministic protocol for solving such a task requires at least $s$ rounds of communication.

    We set $s = n^2$, and design an undirected graph $G = (V,E)$ that has a $3$-cycle iff $\max a_ib_i = 1$. The graph has $|V| = 3n$, we partition the vertices into $3$ disjoint sets $V_1, V_2, V_3$, and number the vertices of each set between $1$ and $n$. The input $a$ encodes the bipartite adjacency matrix between vertices in $V_1$ and $V_2$, and the input $b$ encodes the bipartite adjacency matrix between vertices in $V_2$ and $V_3$. Furthermore, we add a fixed perfect matching between $V_1$ and $V_3$, connecting the $i$-th vertex of $V_1$ to the $i$-th vertex of $V_3$. Now, there exists an undirected $3$-cycle iff there is some $(i,j)$ for which there is an edge between $i \in V_1$ and $j \in V_2$ (meaning $a_{i,j}=1$), and an edge between $j \in V_2$ and $i \in V_3$ (meaning $b_{j,i}=1$). By properly aligning the indices of $b$ to match this structure, a $3$-cycle exists iff $\max a_k b_k = 1$.

    Assume there exists a transformer of depth $L$ with $H$ heads, tokenization dimension $m$ and bit precision $p$ that successfully detects $3$-cycles in an undirected graph. 
    Denote the weights of head $j$ in layer $\ell$ by $Q_j^\ell,K_j^\ell,V_j^\ell\in\mathbb{R}^{m\times m}$ for each $j\in[H]$, and assume w.l.o.g.\ that they are of full rank, otherwise our lower bound would include the rank of these matrices instead of the tokenization dimension (which can only strengthen the lower bound). We design a communication protocol for Alice and Bob to solve the set disjointness problem. The communication protocol will depend on whether $T$ has residual connections or not. We begin with the case that it does have them, the protocol works as follows:

    \begin{enumerate}
        \item Given input sequences $a,b\in\{0,1\}^s$ to Alice and Bob respectively, they calculate the input tokens $x_1^0,\dots,x_{3n}^0$. Note that the adjacency matrix has a block shape, thus Alice can fully calculate the tokens representing $V_1$ ($x_1^0,\dots,x_n^0$) and Bob can fully calculate the tokens representing $V_3$ ($x_{2n+1}^0,\dots,x_{3n}^0$). For the tokens representing $V_2$ ($x_{n+1}^0,\dots,x_{2n}^0$), the adjacency rows depend on both $a$ and $b$. However, these rows can be written exactly as a sum of Alice's contribution (edges to $V_1$) and Bob's contribution (edges to $V_3$): $x_i^0 = x_{i,A}^0 + x_{i,B}^0$.
        \item Bob calculates $K_j^1x_i^0, Q_j^1x_i^0, V_j^1x_i^0$ for his tokens ($V_3$) and his partial tokens $x_{i,B}^0$ ($V_2$) for every head $j\in[H]$ and transmits them to Alice. The number of transmitted bits is $O(nmHp)$.
        \item Alice can now calculate the output of the $r$-th token after the first layer. For the shared tokens in $V_2$, Alice exploits the linearity of the projections by adding her partial components $Q_j^1 x_{i,A}^0$ to Bob's transmitted $Q_j^1 x_{i,B}^0$ to reconstruct the exact queries, keys, and values. Namely, for every head $j\in[H]$, she calculates:
        \begin{align*}
            & s_j^{r} = \sum_{i=1}^{3n} \exp(x_i^{0^\top} K_j^{1^\top} Q_j^1 x_r^0) \\
            & t_{j}^{r} = \sum_{i=1}^{3n}\exp(x_i^{0^\top} K_j^{1^\top} Q_j^1 x_r^0)V_j^1x_i^0~.
        \end{align*}
        The output of the $j$-th head on the $r$-th token is equal to $\frac{t_j^r}{s_j^r}$. For the first $n$ tokens ($V_1$), Alice uses the residual connection which adds the tokens that are known only to her. She now passes these tokens through the MLP to calculate $x_1^1,\dots,x_n^1$, namely the output of the tokens known to her after the first layer.
        \item Similarly to the previous steps, Bob calculates the tokens $x_{2n+1}^1,\dots,x_{3n}^1$ which are known only to him. For the shared tokens in $V_2$ ($x_{n+1}^1,\dots,x_{2n}^1$), Alice and Bob exchange their $O(m)$ dimensional residual components to jointly compute the final MLP outputs. This requires an additional $O(nmp)$ bits, which is asymptotically absorbed into the $O(nmHp)$ bound.
        \item For any additional layer, the same calculations are done so that Alice calculates $x_1^\ell,\dots,x_n^\ell$, Bob calculates $x_{2n+1}^\ell,\dots,x_{3n}^\ell$, and they jointly evaluate the middle tokens $x_{n+1}^\ell,\dots,x_{2n}^\ell$.

    \end{enumerate}

    In case there are no residual connections, after the third step above Alice has the information about all the attention outputs. Since there is no residual connection that ties her computation to Bob's raw input strings, there is no need for more communication rounds, and Alice can evaluate the MLPs and finish the rest of the calculations using the output tokens of the first layer entirely by herself.
    
    By the equivalence between the set disjointness and the undirected $3$-cycle that was described above, Alice returns $1$ iff the inputs $\max_i a_ib_i=1$, and $0$ otherwise. The total number of bits transmitted in this protocol in the case there are residual connections is $O(nmpHL)$, since there are $O(nmpH)$ bits transferred in each layer. The lower bound is determined by the size of the input which is $s=n^2$, hence $mpHL = \Omega(n)$. In the case there are no residual connections there is no dependence on $L$, hence the lower bound becomes $mpH = \Omega(n)$.
\end{proof}

\subsection{Proof of \Cref{thm:connectivity_bound}}\label{appen:proof_connectivity}

\begin{proof}
Assume for the sake of contradiction that a Graph Transformer of constant depth $L = \mathcal{O}(1)$ can successfully solve the Undirected Graph Connectivity problem, that is, determining whether $G$ consists of a single connected component.

As established by \cite{merrill2022saturated}, a uniform polynomial-size threshold circuit of constant depth can simulate a constant-depth Transformer. Therefore, if such a network solves Undirected Graph Connectivity, it implies that this problem is in $\text{TC}^0$.

However, by \cite{reingold2008undirected} Undirected Graph Connectivity is complete for the complexity class $\text{L}$ under $\text{AC}^0$ reductions. Because $\text{AC}^0 \subseteq \text{TC}^0$, a $\text{TC}^0$ circuit solving an $\text{L}$-complete problem implies that every problem in $\text{L}$ can be solved in $\text{TC}^0$, forcing the collapse $\text{L} \subseteq \text{TC}^0$, which contradicts the assumption $\mathsf{TC}^0 \subsetneq \mathsf{L}$ stated in \Cref{thm:connectivity_bound}.
\end{proof}

\subsection{Proof of \Cref{thm:lap_ill_condition}}\label{appen:proof_lap_ill}

\begin{proof}
Let $L$ be the graph Laplacian with a spectral decomposition $L=V\Lambda V^\top$, where $V$ is an orthogonal matrix containing its eigenvectors and $\Lambda$ is a diagonal matrix with the corresponding eigenvalues.
The goal of a transformer given edges $u,v$ is to determine the value of $A_{uv}$ where $A$ is the adjacency matrix of the graph.

We first show that the only way for any continuous function to calculate $A_{uv}$ with inputs $ V, \ Lambda$ is by calculating the formula: $    A_{uv} = -\sum_{k=1}^n \lambda_k V_{uk} V_{vk}$. Let $\mathcal{O}(\Lambda)$ be the group of all orthogonal matrices $Q \in \mathbb{R}^{n \times n}$ that commute with $\Lambda$ (i.e., $Q\Lambda = \Lambda Q$). The orthogonal equivalence class of valid eigenvector matrices for $L$ is exactly:
\begin{equation}
    [V]_\Lambda = \{ V Q \mid Q \in \mathcal{O}(\Lambda) \}~.
\end{equation}
Note that in the case of repeating eigenvalues, the eigenvectors spanning that eigenspace are not unique.

By the Spectral Theorem, the mapping from the eigenspace back to the topology is the unique operator $L = V \Lambda V^\top$. By the definition of the adjacency matrix, the off-diagonal entries are $A_{uv} = -L_{uv} = -V_{u,:}^\top \Lambda V_{v,:}$. For a continuous function $f$ to generalize across the permutation symmetries of the graph, it must evaluate to the same output regardless of which valid eigenvector basis $\tilde{V} \in [V]_\Lambda$ is provided. By Weyl's First Fundamental Theorem of Invariant Theory for the orthogonal group, any polynomial function of vectors that is invariant under orthogonal transformations must be strictly expressible as a function of their inner products. Because the transformations $Q \in \mathcal{O}(\Lambda)$ preserve inner products within their respective eigenspaces, the only invariant linear combination that resolves the specific graph operator $L$ is the eigenvalue-weighted sum of these inner products: $-V_{u,:}^\top \Lambda V_{v,:}$. Therefore, any function $f$ that computes $A_{uv}$ must reduce to evaluating this exact bilinear sum.

Now, let $T$ be the transformer that solves the edge problem, namely  $T(x,y,\Lambda) = x^\top \Lambda y$, where $x = V_u$ and $y=V_v$ the eigenvectors corresponding to $u$ and $v$. The input to the transformer is all the eigenvectors, but for simplicity, we only care in this instance about the ones corresponding to $u$ and $v$. Taking gradient w.r.t $x$ yields $\nabla_x T = -\Lambda y$. Taking the square norm on both sides yields:

\begin{equation}
    \|\nabla_x T\|_2^2 = \|-\Lambda y\|_2^2 = y^\top \Lambda^2 y
\end{equation}

If we substitute $y = V_{v,:}$ we get that:
\begin{equation}
    V_{v,:}^\top \Lambda^2 V_{v,:} = [V \Lambda^2 V^\top]_{vv} = [L^2]_{vv}~.
\end{equation}
Thus, the norm squared of the gradient is equal to the square of the $v$-th diagonal entry of the Laplacian matrix. The Laplacian is defined as $L=D-A$ where $D$ is the diagonal degree matrix, and $A$ is the adjacency matrix. Expanding $L^2$, we get:

\begin{equation}
    L^2 = (D - A)(D - A) = D^2 - DA - AD + A^2~.
\end{equation}
Since $D$ is a diagonal matrix, and the diagonal of $A$ has only $0$ entries, then the diagonal entries of both $DA$ and $AD$ are $0$. The diagonal entry of $D$ in place $v$ is $d_v^2$. The diagonal entry of $A^2$ at place $v$ is the number of paths of length $2$ that begin and end at $v$, which is exactly $d_v$. In total $[L^2]_{vv} = d_v^2 + d_v$. This proves that:
\begin{equation}\label{eq:transformer gradient bound}
    \|\nabla_x T\|_2 = d_v^2 + d_v.
\end{equation}

We now derive the maximum representational capacity of the Transformer architecture to replicate this required topological gradient. Let $X \in \mathbb{R}^{n \times d}$ be the input matrix, where each token $X_u = [V_{u,:}, \Lambda]$ directly concatenates the spectral features. The 1-layer Transformer $T$ updates the representations such that $H = T(X)$. For a specific node $u$, the updated representation is:
\begin{equation}
    h_u = \sum_{w=1}^n P_{uw} X_{w,:} W_V
\end{equation}
where $P = \text{Softmax}(X W_{QK} X^\top)$ is the attention routing matrix. Finally, the edge prediction is generated by the decoder: $\hat{A}_{uv} = \text{MLP}(h_u, h_v)$.

To find the network's maximum sensitivity to the input, we take the derivative of the output $\hat{A}_{uv}$ with respect to the input token $X_u$. By the multivariate chain rule, the gradient is bounded by the product of the individual operation norms:
\begin{equation}
    \left\| \nabla_{X_u} \hat{A}_{uv} \right\|_2 \le \left\| \frac{\partial h_u}{\partial X_u} \right\|_2 \left\| \nabla_{h_u} \text{MLP} \right\|_2
\end{equation}
By definition, the maximum gradient of the MLP is strictly bounded by its Lipschitz constant, $L_{MLP}$. We now evaluate the derivative of the Transformer's output $h_u$ with respect to $X_u$:
\begin{equation}
    \frac{\partial h_u}{\partial X_u} = \sum_{w=1}^n \left( \frac{\partial P_{uw}}{\partial X_u} \right) X_{w,:} W_V + P_{uu} W_V
\end{equation}
Applying the triangle inequality and the submultiplicative property of the spectral norm yields:
\begin{equation}
    \left\| \frac{\partial h_u}{\partial X_u} \right\|_2 \le \left\| \frac{\partial P_{u,:}}{\partial X_u} \right\|_2 \|X\|_2 \|W_V\|_2 + P_{uu} \|W_V\|_2
\end{equation}
The attention probabilities for node $u$ are computed as $P_{u,:} = \text{Softmax}(S_{u,:})$, where the pre-activation logits are $S_{u,:} = X_u W_{QK} X^\top$. By the chain rule, $\frac{\partial P_{u,:}}{\partial X_u} = W_{QK} X^\top \nabla_{S} \text{Softmax}$. Because the spectral norm of the Softmax derivative is strictly bounded by $1$, we can bound the norm of this probability gradient:
\begin{equation}
    \left\| \frac{\partial P_{u,:}}{\partial X_u} \right\|_2 \le \|W_{QK}\|_2 \|X^\top\|_2 \|\nabla_{S} \text{Softmax}\|_2 \le \|W_{QK}\|_2 \|X\|_2 \cdot 1
\end{equation}
Substituting this back into the derivative of $h_u$ and factoring out $\|W_V\|_2$, we obtain:
\begin{equation}
    \left\| \frac{\partial h_u}{\partial X_u} \right\|_2 \le \left( \|W_{QK}\|_2 \|X\|_2^2 + P_{uu} \right) \|W_V\|_2
\end{equation}
Let $\gamma = \|W_{QK}\|_2 \|X\|_2^2$ denote the Maximum Logit Energy. Because the self-attention weight $P_{uu}$ is a probability ($P_{uu} \le 1$), the Transformer's local gradient capacity is strictly upper-bounded by:
\begin{equation}
    \left\| \frac{\partial h_u}{\partial X_u} \right\|_2 \le \|W_V\|_2 \left( \gamma + 1 \right)
\end{equation}
Combining this with the decoder's Lipschitz constant gives the global gradient capacity of the network:
\begin{equation}
    \left\| \nabla_{X_u} \hat{A}_{uv} \right\|_2 \le L_{MLP} \cdot \|W_V\|_2 \left( 1 + \gamma \right)
\end{equation}
Because the orthogonal spatial coordinate $V_{u,:}$ is a direct subset of the features inside the input token $X_u$, the total gradient with respect to $X_u$ must equal or exceed the gradient requirement derived in \Cref{eq:transformer gradient bound}:
\begin{equation}
    d_{max} \leq \sqrt{d_{max}^2 + d_{max}} \le L_{MLP} \cdot \|W_V\|_2 \left( 1 + \gamma \right)
\end{equation}

\end{proof}
\pagebreak
\subsection{Real-World Tasks Datasets Information}

\begin{table}[h]
  \caption{Summary and statistics for GraphBench~\citep{stoll2026graphbenchnextgenerationgraphlearning}, OGB~\citep{hu2020open}, and the ZINC subset~\citep{irwin2012zinc,dwivedi2022benchmarking} (splits reported as Train / Val / Test where indicated).}
  \label{tab:graphbench_selected_summary_statistics}
  \centering
  \setlength{\tabcolsep}{0.75pt}
  \renewcommand{\arraystretch}{1.3}
  \tiny
  \resizebox{\textwidth}{!}{%
  \begin{tabular}{llcccccccc}
    \toprule
    \textbf{Section} & \textbf{Property}
    & TopoOrd.
    & MaxClq.
    & EC-5
    & HIV
    & BBBP
    & BACE
    & Tox21
    & ZINC \\
    \midrule
    \multirow{6}{*}{Summary}
    & Node feat. & \checkmark & -- & \checkmark & \checkmark & \checkmark & \checkmark & \checkmark & \checkmark \\
    & Edge feat. & \checkmark & -- & -- & \checkmark & \checkmark & \checkmark & \checkmark & \checkmark \\
    & Directed & \checkmark & -- & -- & -- & -- & -- & -- & -- \\
    & Split ratio & 98/1/1 & 98/1/1 & 70/10/20 & 80/10/10 & 80/10/10 & 80/10/10 & 80/10/10 & 83.34/8.33/8.33 \\
    & Task type & Node reg. & Node class. & Reg. & Bin. class. & Bin. class. & Bin. class. & Bin. class. & Reg. \\
    & Metric & MAE & F1 & RSE & ROC-AUC & ROC-AUC & ROC-AUC & ROC-AUC & MAE \\
    \midrule
    \multirow{4}{*}{Statistics}
    & \#Graphs
    & 1\,020\,000
    & 1\,020\,000
    & 73\,000
    & 41\,127
    & 2\,039
    & 1\,513
    & 7\,831
    & 12\,000 \\
    & Avg. \#Nodes
    & 16/16/128
    & 16/16/128
    & 13.0
    & 25.5
    & 24.1
    & 34.1
    & 18.6
    & 23.16 \\
    & Avg. \#Edges
    & 41/41/1121
    & 108/108/1812
    & 30.0
    & 27.5
    & 26.0
    & 36.9
    & 19.3
    & 49.83 \\
    & Avg. Deg.
    & 2.6/2.6/8.8
    & 6.8/6.8/14.2
    & 7.0
    & 2.2
    & 2.2
    & 2.2
    & 2.1
    & 4.3 \\
    \bottomrule
  \end{tabular}
  }
\end{table}

We evaluate on three datasets from GraphBench~\citep{stoll2026graphbenchnextgenerationgraphlearning} and five molecular datasets. Max Clique \textsc{hard} (MaxClq.) is a binary node classification task in which the goal is to identify nodes that belong to a maximum clique. Topological Ordering \textsc{easy} (TopOrd.) is a node regression task where each node is assigned a target corresponding to its relative position in an underlying sequence. Electronic Circuits (EC-5) is a graph regression task in which each graph represents a 5-component electronic circuit, and the goal is to predict its power conversion efficiency. For the molecular benchmarks, \texttt{ogbg-molhiv} (HIV), \texttt{ogbg-molbbbp} (BBBP), \texttt{ogbg-molbace} (BACE), and \texttt{ogbg-moltox21} (Tox21) are binary graph classification tasks from the Open Graph Benchmark~\citep{hu2020open}, evaluated using ROC-AUC. Finally, ZINC-subset~\citep{irwin2012zinc,dwivedi2022benchmarking} is a molecular graph regression task, evaluated using mean absolute error (MAE).

\subsection{Experimental Setup}

\paragraph{Reproducibility details.}
General model and training parameters are fully presented in Table~\ref{tab:model_parameters}, and dataset specific parameters are reported in Table~\ref{tab:dataset_training_parameters}. 

\paragraph{Model architectures.} 
We compare three architectures under the same validation-based model-selection protocol. 
DeepSet~\citep{10.5555/3294996.3295098} treats a graph as a set of node tokens: each node is processed independently by a shared multilayer perceptron (MLP) with ReLU activations and dropout, graph-level predictions use masked mean pooling followed by a second MLP, and node-level predictions apply this readout to each node token. 
GIN~\citep{xu2018how} is a standard message-passing baseline with $L$ GIN layers; each layer uses a two-layer ReLU MLP inside the GIN update, followed by batch normalization, ReLU, and dropout, and graph-level predictions use mean pooling followed by a linear readout. 
The graph transformer is implemented as a dense-token Transformer Encoder: node features are first projected linearly to hidden dimension $H$, and the resulting node tokens are processed by $L$ standard Transformer Encoder blocks with four attention heads, feed-forward dimension $4H$, and dropout rate $0.1$. For graph-level tasks, Transformer token embeddings are masked-mean-pooled and passed through a single linear output layer; for node-level tasks, the same linear layer is applied independently to each non-padded node token. 
The implementations do not use edge features or virtual nodes. Consequently, performance on datasets with edge features may be lower than that of edge-aware architectures. DeepSet ignores graph edges entirely, GIN uses only graph connectivity through \texttt{edge\_index}, and the transformer uses graph structure only through the selected tokenization.

\paragraph{Masking protocol.}
For transformer and DeepSet runs, variable-size graphs are batched as dense node-token sequences. Missing tokens are zero padded and excluded by the padding mask during transformer attention and mean pooling; node-level losses and metrics also ignore padded target positions. The transformer uses only a padding mask, not an edge-based attention mask, i.e., all real nodes in a graph may attend to one another, while padded tokens are excluded from attention, pooling, loss, and metrics. 

\paragraph{Tokenization implementations.} 
Tokenization implementation details are summarized in Table.~\ref{tab:tokenization_details}. 
``Pad.'' tokenizations use full-width encodings padded to the maximum graph size for the dataset. ``Trunc.'' tokenizations use fixed-size encodings. 
Laplacian tokens use the symmetric normalized graph Laplacian, sorted eigenvectors after dropping the trivial eigenvector, and random sign flips. 
Random-walk tokens use return probabilities from powers of the row-normalized transition matrix. 
Adjacency tokens use the dataset node order directly: padded mode stores the binary adjacency row, while truncated mode uses a Gaussian random projection of outgoing adjacency rows. In the truncated setting, the random projection matrix is sampled $\mathbf{R}_{ij} \sim \mathcal{N}(0,1)$ once at initialization. For directed graphs, the respective adjacency matrix is not symmetrized.

\begin{table}[h]
  \caption{Model and training parameters for each model type.}
  \label{tab:model_parameters}
  \centering
  \setlength{\tabcolsep}{4pt}
  \renewcommand{\arraystretch}{1.3}
  \begin{tabular}{lccc}
    \hline
    Parameter & DeepSet & GIN & GT \\
    \hline
    Number of layers & $\{1,2,5,10\}$ & $\{1,2,5,10\}$ & $\{1,2,5,10\}$ \\
    Hidden dimension & $\{64,128,256\}$ & $\{64,128,256\}$ & $\{64,128,256\}$ \\
    Dropout & $0.1$ & $0.1$ & $0.1$ \\
    Gradient norm clip & $1.0$ & $1.0$ & $1.0$ \\
    Number of attention heads & -- & -- & $4$ \\
    Learning rate & $0.001$ & $0.001$ & $0.001$ \\
    Weight decay & $0.0001$ & $0.0001$ & $0.0001$ \\
    Seeds & $\{42,123,456\}$ & $\{42,123,456\}$ & $\{42,123,456\}$ \\
    \hline
  \end{tabular}
\end{table}

\begin{table}[h]
  \caption{Dataset-specific training parameters. Abbreviations: ep.=epochs, st.=steps, Cosine=CosineAnnealing, Plateau=ReduceLROnPlateau.}
  \label{tab:dataset_training_parameters}
  \centering
  \setlength{\tabcolsep}{3pt}
  \renewcommand{\arraystretch}{1.25}
  \begin{tabular}{lcccccccc}
    \hline
    Parameter & ZINC & MaxClq. & TopoOrd. & EC-5 & BBBP & BACE & HIV & Tox21 \\
    \hline
    Batch size & $8$ & $256$ & $256$ & $256$ & $64$ & $64$ & $64$ & $64$ \\
    Budget & $1000$ ep. & $11880$ st. & $11880$ st. & $700$ epochs. & $200$ ep. & $200$ ep. & $200$ ep. & $200$ ep. \\
    Scheduler & Cosine & Cosine & Cosine & Cosine & Plateau & Plateau & Plateau & Plateau \\
    Warmup & $50$ & $1000$ & $1000$ & $1000$ & $50$ & $50$ & $50$ & $50$ \\
    \hline
  \end{tabular}
\end{table}

\begin{table}[h!]
  \caption{Summary of the tokenization implementation details.}
  \label{tab:tokenization_details}
  \centering
  \setlength{\tabcolsep}{3pt}
  \renewcommand{\arraystretch}{1.25}
  \footnotesize
  \begin{tabular}{lccp{0.55\textwidth}}
    \hline
    Tokenization & Pad. width & Trunc. width & Notes \\
    \hline
    Laplacian & $n_{\max}-1$ & $d_{\mathrm{tr}}$ & Symmetric normalized Laplacian; eigenvectors sorted by eigenvalue; constant eigenvector removed; signs randomized. \\
    Random walk & $n_{\max}$ & $d_{\mathrm{tr}}$ & Row-normalized transition matrix; token entries are return probabilities for consecutive walk lengths; isolated-node degrees are clamped to $1$. \\
    Adjacency rows & $n_{\max}$ & $d_{\mathrm{tr}}$ & Padded mode uses binary adjacency rows; truncated mode uses Gaussian random projection with variance $1$, sampled under the run seed and cached within the run. \\
    Combined & $3n_{\max}-1$ & $3d_{\mathrm{tr}}$ & Concatenation of Laplacian, random-walk, and adjacency-row tokens. \\
    \hline
  \end{tabular}
\end{table}

 \label{appen:empirical_app}

\end{document}